\definecolor{cRed}{HTML}{DA5527}
\definecolor{cGold}{HTML}{EEB11D}
\definecolor{cBlue}{HTML}{0A73B9}
\definecolor{cGreen}{HTML}{008D0A}
\definecolor{cPurple}{HTML}{990099}
\newcommand{\inlinecomment}[1]{\deleted{#1}}
\newcommand{\somesh}[1]{\inlinecomment{Somesh: #1}}
\theoremstyle{plain}
\newtheorem{prop}{Proposition}
\theoremstyle{definition}
\newtheorem{defn}{Definition}
\newtheorem*{rem*}{Remark}
\newcommand{\hide}[1]{}
\def\e{\mathbf{e}}
\def\h{\mathbf{h}}
\def\v{\mathbf{v}}
\def\x{\mathbf{x}}
\def\z{\mathbf{z}}
\def\A{\mathbf{A}}
\def\C{\mathbf{C}}
\def\N{\mathbf{N}}
\def\V{\mathbf{V}}
\def\W{\mathbf{W}}
\def\X{\mathbf{X}}
\def\Y{\mathbf{Y}}
\def\1{\mathbf{1}}
\def\0{\mathbf{0}}
\def\balpha{\bm{\alpha}}
\def\btau{\bm{\tau}}
\def\cA{\mathcal{A}}
\def\cB{\mathcal{B}}
\def\cF{\mathcal{F}}
\def\cH{\mathcal{H}}
\def\cI{\mathcal{I}}
\def\cK{\mathcal{K}}
\def\cN{\mathcal{N}}
\def\cR{\mathcal{R}}
\def\cS{\mathcal{S}}
\def\cX{\mathcal{X}}
\def\bbN{\mathbb{N}}
\def\bbR{\mathbb{R}}
\def\bbE{\mathbb{E}}
\def\bbI{\mathbbm{I}}
\def\bb1{\mathbbm{1}}
\def\tdY{\tilde{Y}}
\def\ulx{\underline{\x}}
\def\ulX{\underline{\X}}
\def\defas{\triangleq}
\def\pluseq{{\mathrel{+}=}}
\def\Pois{\mathrm{Poisson}}
\def\Uniform{\mathrm{Uniform}}
\def\Exp{\mathrm{Exp}}
\def\IG{\mathrm{IG}}
\def\DeepLIFT{\mathrm{DeepLIFT}}
\def\Attr{A}
\def\SeqEnc{\mathrm{Enc}}
\def\method{CAUSE\xspace} 
\def\Excitation{Excitation\xspace}
\def\Inhibition{Inhibition\xspace}
\def\Synergy{Synergy\xspace}
\def\IPTV{IPTV\xspace}
\def\MT{MT\xspace}
\setlist{noitemsep, topsep=0pt,parsep=0pt,partopsep=0pt, leftmargin=15pt}
\newcommand{\reducemargin}{\vspace*{0ex}} 
\newcommand{\myparagraph}[1]{\vspace*{-1.5ex}\paragraph{#1}}
\newcommand{\mysection}[1]{\vspace*{0ex}\section{#1}\vspace*{0ex}}
\newcommand{\mysubsection}[1]{\vspace*{0ex}\subsection{#1}}
\newcommand{\mysubsubsection}[1]{\vspace*{0ex}\subsubsection{#1}\vspace*{0ex}}
\begin{document}


\twocolumn[

\reducemargin
\icmltitle{\method: Learning Granger Causality from Event Sequences using Attribution Methods}




\begin{icmlauthorlist}
\icmlauthor{Wei Zhang}{uw}
\icmlauthor{Thomas Kobber Panum}{aau}
\icmlauthor{Somesh Jha}{uw,xp}
\icmlauthor{Prasad Chalasani}{xp}
\icmlauthor{David Page}{duke}
\end{icmlauthorlist}

\icmlaffiliation{uw}{Computer Scineces Department, University of Wisconsin-Madison, Madison, WI, USA.}
\icmlaffiliation{aau}{Department of Electronic Systems, Aalborg University, Aalborg, Denmark.}
\icmlaffiliation{xp}{XaiPient, Princeton, NJ, USA.}
\icmlaffiliation{duke}{Department of Biostatistics and Bioinformatics, Duke University, Durham, NC, USA.}

\icmlcorrespondingauthor{Wei Zhang}{zhangwei@cs.wisc.edu}

\icmlkeywords{Machine Learning, ICML}

\vskip 0.3in
]



\printAffiliationsAndNotice{}  

\begin{abstract}
  We study the problem of learning Granger causality between event types from asynchronous, interdependent, multi-type event sequences.
Existing work suffers from either limited model flexibility or poor model explainability and thus fails to uncover Granger causality across a wide variety of event sequences with diverse event interdependency.
To address these weaknesses, we propose \method (\underline{C}ausality from \underline{A}ttrib\underline{U}tions on \underline{S}equence of \underline{E}vents), a novel framework for the studied task.
The key idea of \method is to first implicitly capture the underlying event interdependency by fitting a neural point process, and then extract from the process a Granger causality statistic using an axiomatic attribution method.
Across multiple datasets riddled with diverse event interdependency, we demonstrate that \method achieves superior performance on correctly inferring the inter-type Granger causality over a range of state-of-the-art methods.
\end{abstract}

\mysection{Introduction}
\label{sec:intro}
Many real-world processes generate a massive number of asynchronous and interdependent events in real time.
Examples include the diagnosis and drug prescription histories of patients in electronic health records, the posting and responding behaviors of users on social media, and the purchase and selling orders executed by traders in stock markets, among others.
Such data can be generally viewed as \emph{multi-type event sequences}, in which each event consists of both a timestamp and
a type label, indicating when and what the event is, respectively.

In this paper, we focus on the fundamental problem of uncovering causal structure among event types from multi-type event sequence data.
Since the question of ``true causality'' is deeply philosophical \citep{schaffer2003metaphysics}, and there are still massive debates on its definition \citep{pearl2009causality,imbens2015causal}, we consider a weaker notion of causality based on predictability---Granger causality.
While Granger causality was initially used for studying the dependence structure for multivariate time series \citep{Granger1969,Dahlhaus2003}, it has also been extended to multi-type event sequences \citep{Didelez2008}.
Intuitively, for event sequence data, an event type is said to be (strongly) \emph{Granger causal} for another event type if the inclusion of historical events of the former type leads to better predictions of future events of the latter type.

Due to their asynchronous nature, in the literature, multi-type event sequences are often modeled by multivariate point process (MPP), a class of stochastic processes that characterize the random generation of points on the real line.
Existing point process models for inferring inter-type Granger causality from multi-type event sequences appear to be limited to a particular case of MPPs---Hawkes process \citep{Eichler2017,xu2016learning,achab2017uncovering}, which assumes past events can only independently and additively \emph{excite} the occurrence of future events according to a collection of pairwise kernel functions.
While these Hawkes process-based models are very interpretable and many include favorable statistical properties, the strong parametric assumptions inherent in Hawkes processes render these models unsuitable for many real-world event sequences with potentially abundant \emph{inhibitive} effects or event \emph{interactions}.
For example, maintenance events should reduce the chances of a system breaking down, and a patient who takes multiple medicines at the same time is more likely to experience unexpected adverse events.

Regarding event sequence modeling in general, a new class of MPPs, loosely referred to as neural point processes (NPPs), has recently emerged in the literature \citep{du2016recurrent,xiao2017modeling,mei2017neuralhawkes,Xiao2019}.
NPPs use deep (mostly recurrent) neural networks to capture complex event dependencies, and thus excel at predicting future events due to their model flexibility.
However, NPPs are uninterpretable and unable to provide insight into the Granger causality between event types.

To address this tension between model explainability and model flexibility in existing point process models, we propose \method (\underline{C}ausality from \underline{A}ttrib\underline{U}tions on \underline{S}equence of \underline{E}vents), a framework for obtaining Granger causality from multi-type event sequences using information captured by a highly predictive NPP model.
At the core of \method are two steps: first, it trains a flexible NPP model to capture the complex event interdependency, then it computes a novel Granger causality statistic by inspecting the trained NPP with an axiomatic attribution method.
In this way, \method is the first technique that brings model-agnostic explainability to NPPs and endows NPPs with the ability to discover Granger causality from multi-type event sequences exhibiting various types of event interdependencies.

\myparagraph{Contributions.}

Our contributions are:
\begin{itemize}
 \item We bring model-agnostic explainability to NPPs and propose \method, a novel framework for learning Granger causality from multi-type event sequences exhibiting various types of event interdependency.
 \item We design a two-level batching algorithm that enables efficient computation of Granger causality scalable to millions of events.
 \item We evaluate \method on both synthetic and real-world datasets riddled with diverse event interdependency. Our experiments demonstrate that \method outperforms other state-of-the-art methods.
\end{itemize}


\myparagraph{Reproducibility.}

We publish our data and our code at
\url{https://github.com/razhangwei/CAUSE}.

\mysection{Background}
\label{sec:background}
In this section, we first establish some notation and then briefly introduce the background for several highly relevant topics.
\mysubsection{Notations}

Suppose there are $S$ subjects and each subject $s$ is associated with a multi-type event sequence $\{(t^s_i, k^s_i)\}_{i = 1}^{n_s}$, where $t^s_i \in \bbR_+$ is the timestamp of the $i$-th event of the $s$-th sequence, $k^s_i \in [K]$ is the corresponding event type, and $n_s$ is the sequence length.
We denote by $\z^s_i \in \{0, 1\}^K$ the one-hot representation of each event type $k^s_i$, and use $[n]$ to represent the set $\{1,\ldots, n\}$ for any positive integer $n$.
To avoid clutter, we omit the subscript/superscript of index $s$ when we are discussing a single event sequence and no confusion arises.

\mysubsection{Multivariate Point Process}
\label{subsec:mpp}

\emph{Multivariate point processes (MPPs)} \citep{Daley2003} are a particular class of stochastic processes that characterize the dynamics of discrete events of multiple types in continuous time.
The most common way to define an MPP is through a set of \emph{conditional intensity functions (CIFs)}, one for each event type.
Specifically, let $N_k(t) \defas \sum_{i = 1}^\infty \bb1(t_i \leq t \wedge k_i = k)$ be the number of events of type $k$ that have occurred up to $t$, and let $\cH(t) \defas \{(t_i, k_i) | t_i < t\}$ be the \emph{history} of all types of events before $t$.
The CIF for event type $k$ is defined as the expected instantaneous event occurrence rate conditioned on history, i.e.,
\begin{equation*}
 \lambda^*_k(t) \defas \lim_{\Delta t \downarrow 0} \frac{\bbE [N_k(t + \Delta t) - N_k (t) | \cH(t)]}{\Delta t},
\end{equation*}
where the use of the asterisk is a notational convention to emphasize that intensity is conditioned upon $\cH(t)$.

Different parameterizations of CIFs lead to different MPPs.
One classic example of MPP is the multivariate Hawkes process \citep{hawkes1971point,hawkes1971spectra}, which assumes each $\lambda^*_k(t)$ to be of the following form:
\begin{equation}
 \lambda^*_k(t) = \mu_k + \sum_{i: t_i < t} \phi_{k, k_i} (t - t_i),
 \label{eq:hawkes-cif}
\end{equation}
where $\mu_k \in \bbR_+$ is the baseline rate for event type $k$, and $\phi_{k, k'}(\cdot)$ for any $k, k' \in [K]$ is a non-negative-valued function (usually referred to as \emph{kernel function}) that characterizes the excitation effect of event type $k'$ on type $k$.

More recently, a class of MPPs loosely referred to as \emph{neural point processes} have emerged in the literature \citep{du2016recurrent,xiao2017modeling,mei2017neuralhawkes,Xiao2019}. These models parameterize CIFs with deep neural networks and generally follow an encoder-decoder design: an \emph{encoder} successively embeds the event history $\{(t_j, k_j) \}_{j=1}^i$ into a vector $\h_i \in \bbR^{N_h}$ for each $i$,
and a \emph{decoder} then predicts with $\h_i$ the future CIFs $\lambda^*_k(t)$ after time $ t_i $ (until the next event is generated).

Most MPPs are trained by minimizing the negative log-likelihood (NLL):
\begin{equation}
 \sum_{s=1}^S \sum_{i = 1}^{n_s} \left[- \log \lambda^{*s}_{k^s_i}(t^s_i) + \sum_{k = 1}^K \int_{t^s_{i - 1}}^{t^s_i} \lambda^{*s}_{k}(t') dt' \right],
 \label{eq:nll-mpp}
\end{equation}
where $\lambda^{*s}_k(t) \defas \lambda^*_k(t | \cH_s(t))$ is the CIF of the $s$-th sequence for the type $k$.
In~\eqref{eq:nll-mpp}, the first term corresponds to the NLL of an event of type $k^s_i$ being observed at $t^s_i$ for the $s$-th sequence, whereas the second term is the NLL of the observation that no events of any type occurred during the window $(t^s_{i - 1}, t^s_{i})$.
When there are no closed-form expressions for the integrals $\int_{t^s_{i - 1}}^{t^s_i} \lambda^{*s}_{k}(t') dt'$, Monte-Carlo approximation needs to be used to approximate either the integrals themselves or their gradients with respect to the parameters. However, these approximation techniques are inefficient and generally suffer from large variances, resulting in low convergence rate.

\mysubsection{Granger Causality for Multi-Type Event Sequences}
\label{subsec:granger-causality-mpp}

The Granger causality definition for multi-type event sequences is established based on point process theory \citep{Daley2003}.
To proceed formally, for any $\cK \subseteq [K]$, we denote by $\cH_{\cK}(t)$ the natural filtration expanded by the sub-process ${\{N_k(t)\}}_{k \in \cK}$; that is, the sequence of smallest $\sigma$-algebra expanded by the past event history of any type $k \in \cK$ and $t \in \bbR_+$, i.e., $\cH_{\cK}(t) = \sigma( \{N_k(s) | k \in \cK, s < t\})$.\footnote{Here, we abuse our previous notation $\cH(t)$ that denotes the set of events that occurred prior to $t$. Appendix~\ref{ap:sec:measure-theory} includes a primier on measure and probability theory for readers who are less familiar with some concepts in this subsection. }
We further write $\cH_{-k}(t) = \cH_{[K] \setminus \{k\}}(t)$ for any $k \in [K]$.
\somesh{Describe measure theory in Appendix}

\begin{defn}\citep{Eichler2017}
 For a $K$-dimensional MPP, event type $k$ is \emph{Granger non-causal} for event type $k'$ if $\lambda^*_{k'}(t)$ is $\cH_{-k}(t)$-measurable for all $t$.
\end{defn}
The above definition amounts to saying that a type $k$ is Granger non-causal for another type $k'$ if, given the history of events other than type $k$, historical events of type $k$ do not further contribute to future $\lambda^*_{k'}(t)$ at any time. Otherwise, type $k$ is said to be \emph{Granger causal} for type $k$.

Uncovering Granger causality from event sequences generally is a very challenging task, as the underlying MPP may have rather complex CIFs with abundant event interaction and non-excitative effect.
As a result, existing work tends to restrict consideration to certain classes of parametric MPPs, such as Hawkes processes \citep{Eichler2017,xu2016learning,achab2017uncovering}.
Specifically, for multivariate Hawkes process, it is straightforward from~\eqref{eq:hawkes-cif} that a type $k$ is Granger non-causal for another type $k'$ if and only if the corresponding kernel function $\phi_{k',k}(\cdot) = 0$.
\deleted{However, Hawkes process assumes that \emph{past events can only independently and additively excite (and not inhibit) the occurrence of future events}; this strong parametric assumption can be easily violated on real-world data.
For example, maintenance events should reduce the chances of a system breaking down, and a patient who takes multiple medicines at the same time is more likely to experience unexpected adverse events.}

\mysubsection{Attribution Methods}
\label{subsec:attribution-methods}

We view an \emph{attribution method} for black-box functions as another ``black box'', which takes in a function, an input, and a baseline, and outputs a set of meaningful attribution scores, one per feature.
The following is a formal definition of attribution method.

\begin{defn}[\textbf{Attribution Method}]
 Suppose $\x\in \cX \subseteq \bbR^d$ is a $d$-dimensional input and $\ulx \in \cX$ a suitable baseline.
 Let $\cF_d$ be a class of functions from $\cX$ to $\bbR$.
 A functional $\Attr: \cF_d \times \cX \times \cX \mapsto \bbR^d$ is called an \emph{attribution method} for $\cF_d$ if $\Attr_i(f, \x, \ulx)$ measures the contribution of $x_i$ to the prediction $f(\x)$ relative to $\ulx$ for any $f \in \cF_d$, $\x, \ulx \in \cX$, and $i \in [d]$.
 \label{def:attribution-method}
\end{defn}

Since it is very challenging (and often subjective) to compare different attribution methods, \citet{Sundararajan2017} argue that attribution methods should ideally satisfy a number of axioms (i.e., desirable properties), which we re-state in Definition~\ref{def:attribution-axioms}.
\begin{defn}
An attribution method $\Attr$ is said to satisfy the axiom of:
\begin{enumerate}
 \item \emph{Linearity}, if for any $f, g \in \cF_d$, $\x, \ulx \in \cX$, and $c \in \bbR$,
 \begin{equation}
 \begin{aligned}
 & \Attr(f, \x, \ulx) + \Attr(g, \x, \ulx) = \Attr(f + g, \x, \ulx), \\
 & \Attr(c f, \x, \ulx) = c \cdot \Attr(f, \x, \ulx).
 \end{aligned}
 \tag{A1} \label{P:linearity}
 \end{equation}

 \item \emph{Completeness/Efficiency}, if for any $f \in \cF_d$ and $\x, \ulx \in \cX$,
 \begin{equation}
 f(\x) - f(\ulx) = \sum_{i = 1}^d \Attr_i(f, \x, \ulx). \tag{A2} \label{P:completeness}
 \end{equation}


 \item \emph{Null player}, if for any $f \in \cF_d$ such that $f$ does not mathematically depend on a dimension $i$,
 \begin{equation}
 \Attr_i(f, \x, \ulx) = 0, \tag{A3}
 \label{P:null-player}
 \end{equation}
 for all $\x, \ulx \in \cX$.

 \item \emph{Implementation invariance}, if for any $\x, \ulx \in \cX$, and any $f, g \in \cF_d$ such that $f(\x') = g(\x')$ for all $\x' \in \cX$,
 \begin{equation}
 \Attr(f, \x, \ulx) = \Attr(g, \x, \ulx). \tag{A4}
 \label{P:impl-invariance}
 \end{equation}

\end{enumerate}

\label{def:attribution-axioms}
\end{defn}

Besides these four axioms, we also identify two other useful properties of attribution methods, which are less explicitly mentioned in the literature.
We state these two properties in Definition~\ref{def:attribution-extra-properties}.

\begin{defn}
An attribution method $\Attr$ is said to satisfy
\begin{enumerate}
 \item \emph{Fidelity-to-control}, if for any $f \in \cF_d$, $\x, \ulx \in \cX$, and $i \in [d]$,
 \begin{equation}
 x_i = \underline{x}_i \enskip \Rightarrow \quad \Attr_i(f, \x, \ulx) = 0. \tag{P1} \label{P:fidelity-to-control}
 \end{equation}

 \item \emph{Batchability}, if for any $f \in \cF_{d}$ and any $n$ input/baseline pairs ${\{(\x_i, \ulx_i)\}}_{i \in [n]}$, there exists a function $F: \cX^n \mapsto \bbR$ such that
 \begin{equation}
 \Attr(F, \X, \ulX) = {[\Attr(f, \x_1, \ulx_1), \ldots, \Attr(f, \x_n, \ulx_n)]}, \tag{P2} \label{P:batchability}
 \end{equation}
 where $\X \defas [\x_1, \ldots, \x_n]$ and $\ulX \defas [\ulx_1, \ldots, \ulx_n]$.
\end{enumerate}
\label{def:attribution-extra-properties}
\end{defn}


Many popular attribution methods satisfy most of these six properties, as we show in the Proposition~\ref{prop:IG-DeepLIFT-properties} and~\ref{prop:shapley-properties}.

\begin{prop}
 Integrated Gradients \citep{Sundararajan2017} satisfies all four axioms (\ref{P:linearity}--\ref{P:impl-invariance}) and two properties (\ref{P:fidelity-to-control}--\ref{P:batchability}), and DeepLIFT \citep{shrikumar2017learning} satisfies all but the implementation invariance (\ref{P:impl-invariance}).
 In particular, a choice of $F$ for both methods satisfying batchability~\eqref{P:batchability} is $F(\X) \defas \sum_{i = 1}^n f(\x_i)$.
 \label{prop:IG-DeepLIFT-properties}
\end{prop}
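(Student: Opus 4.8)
The plan is to verify the six properties separately for the two methods: Integrated Gradients (IG) admits a clean closed form, so each property reduces to a short calculus computation, whereas DeepLIFT is defined only implicitly through its backpropagation (multiplier) rules, so its properties must be read off from the \emph{summation-to-delta} construction, and the failure of implementation invariance must be witnessed by an explicit counterexample rather than a formula manipulation.

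For IG I would start from the definition
\[
\Attr^{\IG}_i(f, \x, \ulx) = (x_i - \underline{x}_i)\int_0^1 \frac{\partial f}{\partial x_i}\bigl(\ulx + \alpha(\x - \ulx)\bigr)\, d\alpha,
\]
and dispatch the properties in turn. Linearity (\ref{P:linearity}) is immediate because both $\partial/\partial x_i$ and $\int_0^1 \cdot\, d\alpha$ are linear in $f$. Completeness (\ref{P:completeness}) is the fundamental theorem of calculus applied to $g(\alpha) \defas f(\ulx + \alpha(\x - \ulx))$: summing over $i$ turns the integrand into $g'(\alpha)$ by the chain rule, and $\int_0^1 g'(\alpha)\,d\alpha = f(\x) - f(\ulx)$. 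Null player (\ref{P:null-player}) holds since $f$ not depending on coordinate $i$ forces $\partial f/\partial x_i \equiv 0$. Implementation invariance (\ref{P:impl-invariance}) is the whole point of IG: the formula references only the gradients of $f$, and $f \equiv g$ as functions forces their gradients to coincide pointwise. Fidelity-to-control (\ref{P:fidelity-to-control}) follows from the prefactor $(x_i - \underline{x}_i)$, which vanishes when $x_i = \underline{x}_i$. Finally for batchability (\ref{P:batchability}) I take $F(\X) \defas \sum_{j=1}^n f(\x_j)$ with $\X$ the concatenation $[\x_1,\ldots,\x_n]$; since the $j$-th summand depends only on the $j$-th block, the partial derivative of $F$ in coordinate $i$ of block $j$ equals $\partial f(\x_j)/\partial x_i$, and the straight-line path in $\cX^n$ restricts to the straight-line path in each block, so the IG of $F$ is exactly the block-stacking of the per-example attributions.

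For DeepLIFT the argument is structurally parallel but carried out through its multiplier form $\Attr_i = m_i\,(x_i - \underline{x}_i)$, where the multipliers $m_i$ arise from backpropagating contribution scores through the computational graph. The summation-to-delta property built into DeepLIFT is exactly completeness (\ref{P:completeness}); linearity (\ref{P:linearity}) and null player (\ref{P:null-player}) follow from the linearity of the backward multiplier recursion and the fact that an absent input contributes no multiplier; fidelity-to-control (\ref{P:fidelity-to-control}) again falls out of the $(x_i - \underline{x}_i)$ factor; and batchability (\ref{P:batchability}) holds for the same additive $F$, because the disconnected block structure of $F$ confines the backpropagated contributions to their own block. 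The single property that fails is implementation invariance (\ref{P:impl-invariance}), which I would establish by exhibiting two networks $f$ and $g$ computing the same function on $\cX$ but wired differently (two algebraically equal expressions built from different arrangements of nonlinearities), on which the Rescale rule assigns different contribution scores; such a counterexample is standard and already noted by \citet{Sundararajan2017}.

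The main obstacle is the DeepLIFT half. Unlike IG it has no single closed-form integral, so each axiom must be argued against its layer-wise propagation rules, with care that the chosen rule (Rescale vs.\ RevealCancel) does not disturb the generic properties. The genuinely delicate point is the \emph{failure} of implementation invariance: it cannot be read off from a formula and instead requires an explicit pair of functionally identical but structurally distinct networks on which DeepLIFT disagrees, while simultaneously confirming that IG does \emph{not} disagree there---consistent with the two methods' differing treatment of the chain rule.
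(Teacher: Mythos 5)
Your proposal is correct, and on the one part of the proposition the paper actually argues in detail---batchability with $F(\X) \defas \sum_{i=1}^n f(\x_i)$---your IG argument is essentially the paper's: the gradient of $F$ is block-separable and the straight-line path in $\cX^n$ restricts to the straight-line path in each block, so $\IG(F,\X,\ulX)$ stacks the per-example attributions. Where you diverge is in how much you prove from scratch. The paper delegates \eqref{P:linearity}--\eqref{P:impl-invariance} entirely to \citet{Sundararajan2017} and dismisses \eqref{P:fidelity-to-control} as immediate from the definitions, whereas you derive all of them for IG from the closed form (fundamental theorem of calculus for completeness, the $(x_i-\underline{x}_i)$ prefactor for fidelity-to-control, etc.); this buys self-containedness at the cost of length. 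For DeepLIFT batchability the paper leans on Proposition~2 of \citet{Ancona2017}, which recasts DeepLIFT as $(\x-\ulx)$ Hadamard-multiplied by a ``modified gradient,'' and then observes that a final linear addition layer leaves the per-block modified gradients untouched; your direct argument from the disconnected block structure of the computational graph reaches the same conclusion without that intermediate characterization. Two remarks. First, your planned counterexample showing DeepLIFT actually \emph{violates} implementation invariance is more than the proposition asserts (it only declines to claim \eqref{P:impl-invariance} for DeepLIFT), and the paper proves nothing in that direction, so it is optional. Second, your justification of DeepLIFT linearity via ``linearity of the backward multiplier recursion'' glosses over the fact that the Rescale multipliers are finite-difference ratios and hence not linear in $f$ at a shared nonlinearity; linearity only holds when $f+g$ is implemented as two disjoint subgraphs joined by an addition node. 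Since DeepLIFT is not implementation invariant this caveat matters, though the paper inherits the same imprecision by citing rather than proving, so it is not a gap relative to the paper's own standard.
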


\begin{prop}
 For any $U\subseteq [d]$, let $\bar{U} \defas [d] \setminus U$ and define $\x_{U} \sqcup \ulx_{\bar{U}}$ to be the spliced data point between $\x$ and $\ulx$ such that for any $i \in [d]$
 \begin{equation}
 [\x_{U} \sqcup \ulx_{\bar{U}}]_i \defas \begin{cases}
 \x_i & i \in U, \\
 \ulx_i & i \in \bar{U}.
 \end{cases}
 \label{eq:spliced-data}
 \end{equation}
 Then Shapley values \citep{shapley1953value} with a value function $v_{f, \x, \ulx}(U) \defas f(\x_{U} \sqcup \ulx_{\bar{U}})$ satisfies all four axioms (\ref{P:linearity}--\ref{P:impl-invariance}) and the fidelity-to-control (\ref{P:fidelity-to-control}).
 \label{prop:shapley-properties}

\end{prop}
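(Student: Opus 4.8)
The plan is to recognize each of the required properties as a direct consequence of a classical property of the Shapley value, once the value function is unpacked. Recall that the Shapley value assigns to coordinate $i$ the weighted average of its marginal contributions,
\[
\Attr_i(f, \x, \ulx) \defas \sum_{U \subseteq [d] \setminus \{i\}} \frac{|U|!\,(d - |U| - 1)!}{d!} \bigl[ v_{f, \x, \ulx}(U \cup \{i\}) - v_{f, \x, \ulx}(U) \bigr].
\]
First I would record three elementary facts about $U \mapsto v_{f,\x,\ulx}(U) = f(\x_U \sqcup \ulx_{\bar U})$ that follow immediately from the splicing definition~\eqref{eq:spliced-data}: the boundary identities $v_{f,\x,\ulx}([d]) = f(\x_{[d]} \sqcup \ulx_\emptyset) = f(\x)$ and $v_{f,\x,\ulx}(\emptyset) = f(\ulx)$; and the additivity/homogeneity in $f$, namely $v_{f+g,\x,\ulx}(U) = v_{f,\x,\ulx}(U) + v_{g,\x,\ulx}(U)$ and $v_{cf,\x,\ulx}(U) = c\, v_{f,\x,\ulx}(U)$.

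The three ``easy'' properties then transcribe directly. For linearity~\eqref{P:linearity}, since $\Attr_i$ is a fixed linear combination of the numbers $v(U)$, the additivity and homogeneity of the value function in $f$ carry over verbatim to $\Attr$. For completeness~\eqref{P:completeness}, summing the displayed formula over $i$ telescopes to the classical efficiency identity $\sum_{i=1}^d \Attr_i = v([d]) - v(\emptyset)$, which by the boundary identities equals $f(\x) - f(\ulx)$. For implementation invariance~\eqref{P:impl-invariance}, if $f$ and $g$ agree at every point of $\cX$, then because each spliced argument $\x_U \sqcup \ulx_{\bar U}$ lies in $\cX$, the two value functions coincide as set functions, so the formula returns identical vectors.

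The remaining two properties, null player~\eqref{P:null-player} and fidelity-to-control~\eqref{P:fidelity-to-control}, both reduce to exhibiting coordinate $i$ as a \emph{dummy} in the game, i.e.\ $v(U \cup \{i\}) = v(U)$ for every $U \subseteq [d] \setminus \{i\}$, after which every bracketed marginal contribution vanishes and $\Attr_i = 0$. The key observation is that the two spliced inputs $\x_{U \cup \{i\}} \sqcup \ulx_{\overline{U \cup \{i\}}}$ and $\x_U \sqcup \ulx_{\bar U}$ differ in exactly one coordinate, namely $i$, taking value $x_i$ in the former and $\ulx_i$ in the latter. If $f$ does not mathematically depend on coordinate $i$, the two function values agree despite this difference, yielding~\eqref{P:null-player}; if instead $x_i = \ulx_i$, the two inputs are literally equal, yielding~\eqref{P:fidelity-to-control}. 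The main obstacle here is bookkeeping rather than depth: the only genuine verification is this single-coordinate-difference argument for the dummy condition, while everything else is a faithful translation of the standard linearity, efficiency, and dummy axioms of cooperative game theory.
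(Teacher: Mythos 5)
Your proposal is correct and follows essentially the same route as the paper's proof: invoke linearity and efficiency of Shapley values for~\eqref{P:linearity} and~\eqref{P:completeness}, note that identical value functions give identical attributions for~\eqref{P:impl-invariance}, and reduce both~\eqref{P:null-player} and~\eqref{P:fidelity-to-control} to the observation that the two spliced inputs in each marginal contribution differ only in coordinate $i$, making $i$ a dummy player. The only cosmetic difference is that you spell out the boundary identities and the linearity of $v$ in $f$ explicitly, where the paper asserts them as immediate.
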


\begin{proof}
 We include proofs for both propositions in Appendix~\ref{ap-sec:techinical}.
\end{proof}

\mysection{Proposed: \method}
\label{sec:method}
In this section, we formally present \method, a novel framework for learning Granger causality from multi-type event sequences.
Our framework consists of two steps: first, it trains a neural point process (NPP) to fit the underlying event sequence data; then it inspects the predictions of the trained NPP to compute a Granger causality statistic with some ``well-behaved'' attribution method $\Attr(\cdot)$, which we assume satisfies the following properties: linearity~\eqref{P:linearity}, completeness~\eqref{P:completeness}, null player~\eqref{P:null-player}, fidelity-to-control~\eqref{P:fidelity-to-control}, and batchability~\eqref{P:batchability}.

In what follows, we first describe the architecture of the used NPP in Section~\ref{subsec:npp-architecture}.
Then we elaborate the intuition and the definition of our Granger causality statistic in Section~\ref{subsec:granger-causality-statistic}.
Section~\ref{subsec:computing-causality-statistic} explains the computational challenges and presents a highly efficient algorithm for computing such statistic.
We conclude this section by discussing the choice of attribution methods for \method in Section~\ref{subsec:choice-of-attribution-method}.

\somesh{Need a picture discribing the overall architecture of \method.}

\mysubsection{A Semi-Parametric Neural Point Process}
\label{subsec:npp-architecture}

The design of our NPP follows the general encoder-decoder architecture of existing NPPs (Section~\ref{subsec:mpp}), but we innovate the decoder part to enable both modeling flexibility and computational feasibility.

\myparagraph{Encoder.}

First, we convert each event $i$ into an embedding vector $\v_i$ that summarizes both the temporal and the type information for that event, as follows:
\begin{equation}
  \v_i = [\vartheta(t_i - t_{i - 1}) ; \V^T \z_i],
\end{equation}
where $\vartheta(\cdot)$ is a pre-specified function that transforms the elapsed time into one or more temporal features (simply chosen to be identity function in our experiments), $\V$ is the embedding matrix for event types, and recall that $\z_i$ is the one-hot encoding of the even type $k_i$.

We then obtain the embedding of a history from event embedding sequences by
\begin{equation}
  \h_i = \SeqEnc(\v_1, \v_2, \ldots, \v_i),
  \label{eq:seq-encoder}
\end{equation}
where $\SeqEnc(\cdot)$ is a sequence encoder\replaced{ and chosen to be a Gated Recurrent Unit (GRU) \protect\citep{Cho2014} in our experiments.}{. It is chosen to be a Gated Recurrent Unit (GRU) \mbox{\citep{Cho2014}} in our experiments, although alternative architectures, such as LSTMs, RNNs with attention, and transformers \mbox{\citep{Vaswani2017}}, are also applicable.}

\myparagraph{Decoder.}

An ideal decoder should fullfill the following two desiderata: (a) it should be \emph{flexible enough} to produce from $\h_i$ a wide variety of $\lambda^*_k(t)$ with complex time-varying patterns; and (b) it should also be \emph{computationally manageable}, particularly in terms of computing the cumulative intensity $\int_{t_i}^{t_{i + 1}} \lambda^*_k(t') dt'$, a key term in the log-likelihood-based training given in~\eqref{eq:nll-mpp} \emph{and} the definition of our Granger causality statistic in the subsequent subsections.

We propose a novel semi-parametric decoder that enjoys both the flexible modeling of CIF and computational feasibility.
Specifically, for each $i \in [n]$, we define the CIF $\lambda^*_k(t)$ on $(t_i, t_{i+1}]$ to be a weighted sum of a set of basis functions, as follows:
\begin{equation}
  \lambda^*_k(t) = \sum_{r = 1}^R \alpha_{k, r}(\h_i) \psi_r(t - t_i ),
  \label{eq:CIF}
\end{equation}
where $\{\psi_r(\cdot)\}_{r = 1}^R$ is a set of pre-specified positive-valued basis functions, and $\balpha: \bbR^{N_h} \mapsto \bbR_+^{K \times R}$ is a feedforward neural network that computes $R$ positive weights for each of the $K$ event types.
In this way, by choosing $\{\psi_r(\cdot)\}_{r = 1}^R$ to be a rich-enough function family, the CIFs defined in~\eqref{eq:CIF} are able to express a wide variety of time-varying patterns. More importantly, since the parts relevant to neural networks---$\balpha(\cdot)$ and $\SeqEnc(\cdot)$---are separated from the basis functions, we can evaluate the integral $\int_{t_i}^{t_{i + 1}} \lambda^*_k(t') dt'$ analytically, as follows:
\begin{equation}
  \int_{t_i}^{t_{i + 1}} \lambda^*_k(t') dt' = \sum_{r = 1}^R \alpha_{k, r}(\h_i) \Psi_r(t_{i + 1} - t_i),
  \label{eq:cumulative-intensity-close-form}
\end{equation}
where $\Psi_r(\Delta t) \defas \int_{0}^{\Delta t} \psi_r(t)\,dt$ is generally available for many parametric basis functions.

\replaced{We choose the basis functions $\{\psi_r(\cdot)\}_{r = 1}^R$ to be the densities of a Gaussian family with increasing means and variances. This design of basis functions reflects a reasonable inductive bias that the CIFs should vary more smoothly as the time increases. The details are given in Appendix~\ref{ap:subsec:dyadic-gaussian-basis}.
}{
Inspired by the dyadic interval bases used by \citet{Bao2017}, we choose the basis functions $\{\psi_r(\cdot)\}_{r = 1}^R$ to be the densities for a Gaussian family $\{\cN(\mu_r, \sigma_r^2) \}_{r = 1}^R$, whose means are given by
\begin{equation}
  \mu_r = \begin{cases}
    0, & r = 1, \\
    L / 2^{R - r},  & r = 2, \ldots, R,
  \end{cases}
\end{equation}
and the standard deviations by $\sigma_r = \max(\mu_r / 3, \mu_2 / 3)$ for $r \in [R]$.
This design of basis functions reflects a reasonable inductive bias that the CIFs should vary more smoothly as the time increases. As shown in Figure~\ref{fig:dyadic-normal-example} for an example of $L=100$ and $R=5$, the first a few bases, due to their small means and variances, capture the short-term effects, whereas the last several characterize the mid/long-term effects.
}



\mysubsection{From Event Contributions to a Granger Causality Statistic}
\label{subsec:granger-causality-statistic}

Now that we have trained a flexible NPP that can successively update the history embedding after each event $i$ occurrence and then predict the future CIFs $\lambda^*_k(t)$ after $t_i$ until $t_{i + 1}$;
we would like to ask: \emph{can we quantify the contribution of each past event to each prediction?}
Since in our case $\lambda^*_k(t)$'s are instantiated by two potentially highly nonlinear neural networks (i.e., $\SeqEnc(\cdot)$ and $\balpha(\cdot)$), it is not as straightforward to obtain the past event's contribution to current event occurrence as in the case of some parametric MPPs (e.g., Hawkes processes).

A natural idea for the aforementioned question would be applying some attribution method to $\lambda^*_k(t)$'s. To do so, however, there are two challenges.
First, the predictions in our case are time-varying functions rather than static quantities (e.g., the probability of a class, as commonly seen in existing applications of attribution methods); thus it is unclear which target should be attributed.
Second, as the input to $\lambda^*_k(t)$'s are multi-type event sequences with asynchronous timestamps, it is also unclear which baseline should be used.

We tackle the first challenge by setting the \emph{cumulative intensity} $\int_{t_i}^{t_{i + 1}} \lambda^*_k(t')\,dt'$ to be the attribution target.
This is not only because the cumulative intensity reflects the overall effect of $\lambda^*_k(t')$ on $(t_i, t_{i + 1}]$, but also because it has a clear meaning in the context of point processes: it is the rate of the Poisson distribution that the number of events of type $k$ on $(t_i, t_{i + 1}]$ satisfies.
More importantly, since the cumulative intensity has a closed form as in~\eqref{eq:cumulative-intensity-close-form}, its gradients with respect to its input can be computed both precisely and efficiently.
Note that by adopting this target, the input now includes not only $\{(t_i, \z_j)\}_{j \leq i}$ but also $t_{i + 1}$; thus we define $\x_i \defas [t_1, \z_1, \ldots, t_{i}, \z_{i}, t_{i + 1}]$ and write the target $\int_{t_i}^{t_{i + 1}} \lambda^*_k(t')\,dt'$ as $f_k(\x_i)$.

As for the second challenge, we choose the baseline of an input $\x_i$  to be $\ulx_i \defas [t_1, \0, \ldots, t_{i - 1}, \0, t_{i + 1} ]$; that is, the one-hot encodings of all observed event types are replaced with zero vectors.
Since $\x_i$ and $\ulx_i$ only differ in the dimensions corresponding to the event types, i.e., $\lbrace z_{j, k_j} \rbrace_{j \leq i}$, by the fidelity-to-control~\eqref{P:fidelity-to-control}, then only these dimensions will have non-zero attributions.
With completeness~\eqref{P:completeness}, it further implies that for every type $k$
\begin{equation}
  f_k(\x_i) - f_k(\ulx_i) = \sum_{j = 1}^{i} \Attr_j(f_k, \x_i, \ulx_i),
  \label{eq:type-IG-completeness}
\end{equation}
where $\Attr_j(f_k, \x_i, \ulx_i )$ is the attribution to $z_{j, k_j}$.
Thus, the term $\Attr_j(f_k, \x_i, \ulx_i)$ can be viewed as the \emph{event contribution} of the $j$-th event to the cumulative intensity prediction  $f_k(\x_i)$ relative to the baseline $f_k(\ulx_i)$.
Besides, event \emph{timestamps} are identical in $\x_i$ and $\ulx_i$, thus this contribution comes only from the event \emph{type} and denotes how type $k_j$ contributes to the prediction of type $k$ for a specific event history $\x_i$.

\myparagraph{A Granger Causality Statistic.}

We have established $\Attr_j(f_k, \x_i, \ulx_i)$'s as the past events' contribution to the cumulative intensity $f_k(\x_i)$ on interval $(t_i, t_{i + 1}]$.
A further question is: \emph{can we infer from these event contributions for individual predictions the population-level Granger causality among event types?}

To answer this question, we define a novel statistic indicating the Granger causality for type $k'$ to type $k$ as follows:
\begin{equation}
  Y_{k, k'} \defas \frac{\sum_{s = 1}^S \sum_{i = 1}^{n_s} \sum_{j=1}^{i}  \bbI(k^s_j = k') \Attr_j(f_k, \x^s_i, \ulx^s_i)}{\sum_{s = 1}^S \sum_{j = 1}^{n_s} \bbI(k^s_j = k')}.
\label{eq:causality-statistic}
\end{equation}
Here the numerator aggregates the event contributions for all event occurrences over the whole dataset, and denominator accounts for the fact that some event types may occur far more frequently than other types, which can lead to unreasonally large scores if used without such normalization.
Note that an event contribution $\Attr_j(f_k, \x^s_i, \ulx^s_i)$ may be negative when the event $j$ exerts an inhibitive effect; thus $Y_{k, k'}$ can also be negative and characterize the Granger causality from type $k'$ to type $k$ even when the influence is inhibitive.

\inlinecomment{Prasad: What's the connection between this statistics and the definition of 1 of Granger causality?}

\myparagraph{Attribution Regularization.}

One caveat in~\eqref{eq:type-IG-completeness} and \eqref{eq:causality-statistic} is that our chosen baselines $\ulx_i$ have never appeared in the training procedure, thus the value of $f(\ulx_i)$ may be meaningless or even misleading.
Ideally, we would like $f_k(\ulx_i)$ to be the cumulative intensity of type $k$ given that history prior to $t_i$ consists of ``null'' events at $t_1, t_2, \ldots, t_i$.
Thus a natural prior reflecting this idea is to make $f_k(\ulx_i)$ nearly zero for any handcrafted baseline $\ulx_i$.
Such an ``invariance'' property on $f$ can be achieved by adding an auxiliary $l_1$ regularization for each $\x_i$ in the NLL given in~\eqref{eq:nll-mpp}, leading to a training objective
\begin{equation}
  \sum_{s=1}^S \sum_{i = 1}^{n_s} \Big\{ \underbrace{- \log \lambda^*_{k^s_i}(t^s_i) + \sum_{k = 1}^K  f_k(\x^s_{i - 1})}_{\text{negative log-likelihood}} + \underbrace{\sum_{k = 1}^K  \eta f_k(\ulx^s_{i - 1})}_{\text{regularization}} \Big\},
  \label{eq:objective-erpp}
\end{equation}
where $\eta$ is a hyperparameter.

\mysubsection{Computing the Granger Causality Statistic}
\label{subsec:computing-causality-statistic}

While~\eqref{eq:causality-statistic} defines $Y_{k, k'}$'s analytically, it is rather challenging to compute them.
This is because a naive implementation would require applying $\Attr(\cdot)$ at each event occurrence, which is computationally prohibitive for a dataset of millions of events.
Note that the normalization in~\eqref{eq:causality-statistic} can be easily calculated; so if we write $\tdY^s_{k, k'} \defas \sum_{i = 1}^{n_s} \sum_{j=1}^{i}  \bbI(k^s_j = k') \Attr_j(f_k, \x^s_i, \ulx^s_i)$, the problem is reduced to how to efficiently compute $\sum_{s = 1}^S \tdY^s_{k, k'}$.

We propose an efficient algorithm to compute $\sum_{s = 1}^S \tdY^s_{k, k'}$, which is summarized in Algorithm~\ref{alg:batching-scheme}.
At the core of our algorithm are two levels of batching:
(a) \emph{intra-sequence batching}, which allow the computation of $\tdY^s_{k, k'}$ with only one call of $\Attr(\cdot)$;
and (b) \emph{inter-sequence batching}, which enables  batch computation of $\lbrace Y^s_{k,k'} \rbrace_{s \in \cB}$ for a mini-batch of event sequences indexed by $\cB$. We explain the details of these two levels of batching as follows.

\begin{algorithm2e}[!tbp]
  \caption{Computation of the Granger causality statistic.}
  \label{alg:batching-scheme}
  \KwIn{Event sequences $ \lbrace \lbrace (t^s_i, k^s_i) \rbrace_{i \in [n_s]} \rbrace_{s \in [S]}$, an attribution method $\Attr(\cdot)$, and a trained NPP}
  \KwOut{Granger causality statistic $\Y$.}

  Initialize $ \mathbf{\tdY} = \0$, $\cI = [S]$ \;
  \While {$|\cI| > 0$}{
    Sample a batch of sequence indices $\cB \subset \cI$ \;
    \For {$k = 1, \ldots, K$}{
      Compute $\C = \Attr(\sum_{s \in \cB} \sum_{i = 1}^{n_s} F^s_{k, i}, \X, \ulX)$\; \label{alg:line:call-attribution}
      \For {$k' = 1, \ldots, K$}{
        $\tdY_{k, k'} \pluseq \sum_{s \in \cB} \sum_{j = 1}^{n_s} \bbI (k^s_j = k') C^s_j$
      }
    }
    $\cI \leftarrow \cI \setminus \cB$\;
  }
  Compute $Y_{k, k'} = \tdY_{k, k'} / \sum_{s = 1}^S \sum_{j = 1}^n \bbI(k_j^s = k')$, $\forall k, k' \in [K]$.

\end{algorithm2e}

\myparagraph{Intra-Sequence Batching.}
As this part only deals with a particular event sequence, to simplify the notation, we omit the sequence index $s$ for now.
Note that $\x_1 \prec  \x_2 \prec  \cdots \prec \x_n$ and due to the recurrent nature of $f$, all $f(\x_i)$ for $i \in [n]$ can be computed in a single forward pass with the shared input $\x_n$.
Denote by $F = \lbrace F_{k, i}(\cdot) \rbrace_{k \in [K], i \in [n]}$ a matrix-valued function such that $F_{k, i}(\x_n) = f_k(\x_i)$ for any $k \in [K], i \in [n]$.

The equivalence between $f$ and $F$ means that,
\begin{equation*}
  \Attr_j(f_k, \x_i, \ulx_i) = \Attr_j(F_{k, i}, \x_n, \ulx_n),
\end{equation*}
which further implies that we can rewrite $\tdY^s_{k, k'}$ as a weighted sum of attribution scores for the same input $\x_n$ and baseline $\ulx_n$. Since we are not interested in computing the individual attribution scores but their sum, we can leverage the linearity property~\eqref{P:linearity} to compute the attribution scores directly for the sum, as shown in the following proposition.
\begin{prop}
  For an attribution method $A(\cdot)$ with the linearity~\eqref{P:linearity} and the null player~\eqref{P:null-player}, it holds that
  \begin{equation}
    \tdY^s_{k, k'} = \sum_{j = 1}^{n_s} \bbI (k^s_j = k') \Attr_j \left( \sum_{i= 1}^{n_s} F_{i, k_i} , \x^s_n, \ulx^s_n \right).
  \end{equation}
  \label{prop:intra-seq-batching}
  \reducemargin
\end{prop}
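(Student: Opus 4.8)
The plan is to turn the nested double sum defining $\tdY^s_{k,k'}$ into a single attribution call, using exactly the two hypotheses of the statement. Suppressing the sequence index $s$ as in the preceding text, I would begin from the definition
\[ \tdY_{k,k'} = \sum_{i=1}^{n} \sum_{j=1}^{i} \bbI(k_j = k')\, \Attr_j(f_k, \x_i, \ulx_i), \]
and rewrite each summand through the equivalence $\Attr_j(f_k, \x_i, \ulx_i) = \Attr_j(F_{k,i}, \x_n, \ulx_n)$ noted just above the statement, so that every attribution is taken against the single shared input $\x_n$ and baseline $\ulx_n$.

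The first substantive step is to free the inner summation range from its dependence on $i$. Since $F_{k,i}(\x_n) = f_k(\x_i)$ and $f_k(\x_i)$ is produced from the recurrent embedding $\h_i = \SeqEnc(\v_1,\ldots,\v_i)$, the function $F_{k,i}$ simply does not read the type coordinates $z_{j,k_j}$ of any event with $j > i$. The null player axiom~\eqref{P:null-player} therefore gives $\Attr_j(F_{k,i},\x_n,\ulx_n) = 0$ for every $j > i$, so the inner sum may be padded out to the full range $j \in [n]$ with no change in value:
\[ \tdY_{k,k'} = \sum_{i=1}^{n} \sum_{j=1}^{n} \bbI(k_j = k')\, \Attr_j(F_{k,i}, \x_n, \ulx_n). \]

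Now that the two ranges are decoupled, I would exchange the order of summation and factor out the indicator, which no longer involves $i$, to obtain $\sum_{j=1}^{n} \bbI(k_j = k') \sum_{i=1}^{n} \Attr_j(F_{k,i}, \x_n, \ulx_n)$. Because all of these attributions share the same input and baseline, the additive part of the linearity axiom~\eqref{P:linearity} lets me replace $\sum_{i=1}^{n} \Attr_j(F_{k,i}, \x_n, \ulx_n)$ by $\Attr_j\!\left(\sum_{i=1}^{n} F_{k,i}, \x_n, \ulx_n\right)$, which is precisely the claimed identity.

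The one place that is more than bookkeeping is the null player step: I must argue that $F_{k,i}$ is genuinely independent of the later-event type coordinates, not merely insensitive to them. This is exactly where the causal, left-to-right structure of the encoder matters, since $\h_i$ is a function of $\v_1,\ldots,\v_i$ alone and the baseline $\ulx_n$ differs from $\x_n$ only in type dimensions; together these guarantee that the dimensions with $j>i$ are true null players for $F_{k,i}$. Everything else---the reindexing, the summation swap, and the appeal to linearity to merge the $F_{k,i}$ into one function---is routine, as is the reliance on the previously established equivalence between attributing $f$ and attributing $F$.
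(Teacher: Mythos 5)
Your proof is correct and follows essentially the same route as the paper's: rewrite each term as an attribution of $F_{k,i}$ against the shared input $\x_n$ and baseline $\ulx_n$, use the null player axiom to decouple the two summation ranges, exchange the order of summation, and merge the resulting sum of functions into a single attribution call via linearity. The only differences are cosmetic---you pad the inner sum over $j$ before swapping, whereas the paper swaps first and then pads the sum over $i$---and your formulation of the null-player step ($F_{k,i}$ does not depend on events \emph{after} the $i$-th, so $\Attr_j(F_{k,i},\x_n,\ulx_n)=0$ for $j>i$) is the correct reading of what the paper's proof intends.
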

\begin{proof}
  The proof is in Appendix~\ref{ap:subsec:proof-intra-seq-batching}.
\end{proof}

\myparagraph{Inter-Sequence Batching.}

We now discuss how to efficiently compute $\lbrace Y^s_{k,k'} \rbrace_{s \in \cB}$ for a mini-batch of event sequences indexed by $\cB$.
The key idea for a significant computational speed-up here is that if $\Attr(\cdot)$ satisfies batchability~\eqref{P:batchability},  we can then batch the computation of different sequences with a single call of $\Attr(\cdot)$.

To simplify the discussion, we assume without loss of generality that $\cB = \lbrace 1, \ldots, |\cB| \rbrace$ and $n_s \equiv n$ for all $s \in \cB$.
Let $\X = [\x_s]_{s \in \cB}$ and analogously the corresponding baselines $\ulX$.
We further override our previous notation and denote by $F = \lbrace F^s_{k, i}(\cdot) \rbrace_{s \in [S], k \in [K], i \in [n]}$ a new tensor-valued function such as that $F^{s}_{k, i}(\X) = f_k(\x^s_i)$.
Then with Proposition~\ref{prop:IG-DeepLIFT-properties}, we have that
\begin{equation*}
  \Attr(\sum_{s \in \cB} \sum_{i = 1}^{n_s} F^s_{k, i}, \X, \ulX) = \left[ \Attr_j ( \sum_{i= 1}^{n_s} F_{i, k_i} , \x^s_n, \ulx^s_n ) \right]_{\substack{s \in \cB\\ j \in [n]}}.
  \label{eq:event-contribution-batch}
\end{equation*}

\myparagraph{Time Complexity Analysis.}

With our two-level batching scheme, Algorithm~\ref{alg:batching-scheme} only requires $O(S K / B)$ invocations of $\Attr(\cdot)$, a significant reduction from the $O(S N K)$ invocations required by a naive implementation that directly calculates $Y_{k,k'}$'s, where $N$ is the average sequence length.
Since modern computation hardware (such as GPUs) enables calling $\Attr(\cdot)$ with a batch of inputs being almost as fast as calling it with a single input, our algorithm can achieve up to \emph{three orders-of-magnitude speedup} over a naive implementation on datasets with relatively large $N$ and $B$. (See Section~\ref{subsubsec:scalability} for empirical evaluations.)

\mysubsection{Choice of Attribution Methods}
\label{subsec:choice-of-attribution-method}

In our experiments, we choose the attribution method $\Attr(\cdot)$ to be the Integrated Gradients, which is defined as follow:
\begin{equation}
  \IG(f, \x, \ulx) \defas (\x - \ulx) \odot \int_{0}^1 \left. \frac{\partial f(\tilde{\x})}{\partial \tilde{\x}} \right\vert_{\tilde{\x}=\ulx + \alpha (\x - \ulx)} d\alpha,
  \label{eq:ig-defn}
\end{equation}
where $\odot$ is the Hadamard product.
Nevertheless, \method does not depend on a specific attribution method but a set of properties that we have stated upfront;
this means that any other attribution methods that satisfy these properties (e.g., DeepLIFT) should be applicable to \method.
Also note that batchability~\eqref{P:batchability} is only used in the inter-sequence batching for speeding up the computation; thus, if efficiency is less of a concern, or the computation of attributions for different inputs can be accelerated in alternative ways,\footnote{In fact, for almost all attribution methods, the attribution for different inputs is embarrassingly parallelizable.} attribution methods that only violate batchability, such as Shapley values, should also be applicable.

\mysection{Experiments}
\label{sec:experiments}






In this section, we present the experiments that are designed to evaluate \method and answer the following three questions:
\begin{itemize}
  \item \textbf{Goodness-of-Fit}: How good is \method at fitting multi-type event sequences?
  \item \textbf{Causality Discovery}: How accurate is \method at discovering Granger causality between event types?
  \item \textbf{Scalability}: How scalable is \method?
\end{itemize}

The experimental results on five datasets show that \method (a) outperforms state-of-the-art methods in both fitting and discovering Granger causality from event sequences of diverse event interdependency, (b) can identify Granger causality on real-world datasets that agrees with human intuition, and (c) can compute the Granger causality statistic three orders-of-magnitude faster due to our optimization.

\mysubsection{Experimental Setup}

\paragraph{Datasets.}

We designed three synthetic datasets to reflect various types of event interactions and temporal effects.
\begin{itemize}
 \item \textbf{\Excitation}: This dataset was generated by a multivariate Hawkes process, whose CIFs are defined in~\eqref{eq:hawkes-cif}. The exponential decay kernels were used, and a weighted ground-truth causality matrix was constructed with the $\ell_1$ norms of the kernel functions $\phi_{k, k'}(\cdot)$.

 \item \textbf{\Inhibition}: This dataset was generated by a multivariate self-correcting process \citep{isham1979self}, whose CIFs are of the form: $\lambda^*_k(t) = \exp(\alpha_k t + \sum_{i: t_i < t} w_{k, k_i})$, where $a_k > 0$ and $w_{k, k'} \leq 0$.
 A weighted ground-truth causality matrix was formed with the pairwise weights $w_{k, k'}$.

 \item \textbf{\Synergy}: Generated by a proximal graphical event model (PGEM) \citep{Bhattacharjya2018}, this dataset contains synergistic effects between a pair of event types to a third event type. A binary ground-truth causality matrix was constructed from the dependency graph of the PGEM.
\end{itemize}

We also included two real-world datasets used in existing literature.
\begin{itemize}
 \item \textbf{\IPTV} \citep{Luo2015}: Each sequence records the history of TV watching behavior of a user, and the event types are the TV program categories. This dataset, however, does not contain ground-truth causality between TV program categories.
 \item \textbf{MemeTracker (\MT)}:\footnote{\url{https://www.memetracker.org/data.html}} Each sequence represents how a phrase or quote appeared on various online websites over time during the period of August 2008 to April 2009, and the event types are the domains of the top websites.
 Like previous studies \citep{achab2017uncovering,Xiao2019}, a weighted ground-truth causality matrix was \emph{approximated} by whether one site contains any URLs linking to another site.

\end{itemize}

The parameter settings for synthetic datasets, the preprocessing steps for real-world datasets, and the dataset statistics are detailed in Appendix~\ref{ap:subsec:datasets}.

\myparagraph{Methods for Comparsison.}

We compared our method to the following baselines:
\begin{itemize}
 \item \textbf{HExp}: Hawkes process with fixed exponential kernels.
 \item \textbf{HSG} and \textbf{NHPC}: Hawkes process with sum of Gaussian kernels \citep{xu2016learning} and nonparametric Hawkes process cumulant matching \citep{achab2017uncovering}. These two methods represent the state-of-the-art parametric and nonparametric methods for learning Granger causality for Hawkes process, respectively.
 \item \textbf{RPPN}: Recurrent point process network \citep{Xiao2019}, to the best of our knowledge, the only NPP that can provide summary statistics for Granger causality, which is enabled by its use of an attention mechanism.
\end{itemize}

The implementation details and hyperparameter configurations for \method and various baselines are provided in Appendix~\ref{ap:subsec:method-config}

\myparagraph{Evaluation Metrics.}

The hold-out negative log-likelihood (\textbf{NLL}) was used for evaluating the goodness-of-fit of each method on various datasets, and the \textbf{Kendall's $\btau$} coefficient and the area under the ROC curve (\textbf{AUC}) were used for evaluating the estimated Granger causality matrix against the ground truth.
Non-binary ground-truth causality matrices were binarized at zero in the evaluation of AUCs.
We performed five-fold cross-validation and report the average results.

\subsection{Detailed Results}

\subsubsection{Goodness-of-fit}

We start by examining the goodness-of-fit of various methods on various datasets, since if a method fails to fit the data, it is unlikely to detect the true Granger causality between event types.
As shown in Figure~\ref{fig:barplot-nll}, \method attains smaller NLLs than all baselines on all datasets, suggesting that \method consistently has a better fit than all baselines. Notably, on all three synthetic datasets, the NLLs of \method nearly match those computed by the ground-truth models.
These results confirm the flexibility of \method in learning the various types of event interactions and temporal effects.

\begin{figure}[!tbp]
  \reducemargin
  \centering
  \subfloat[\Excitation]{\includegraphics[width=0.33\columnwidth]{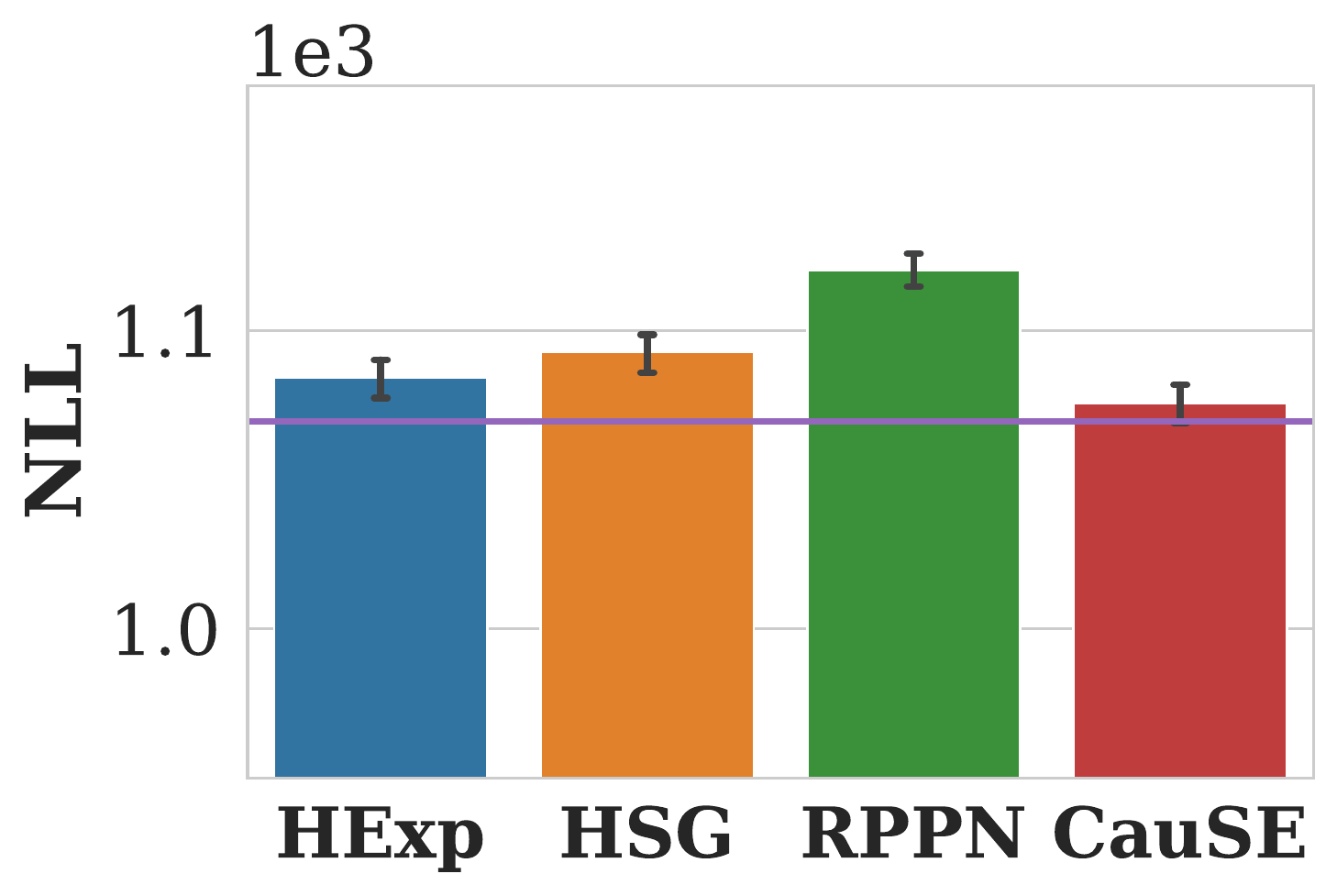}}
  \subfloat[\Inhibition]{\includegraphics[width=0.33\columnwidth]{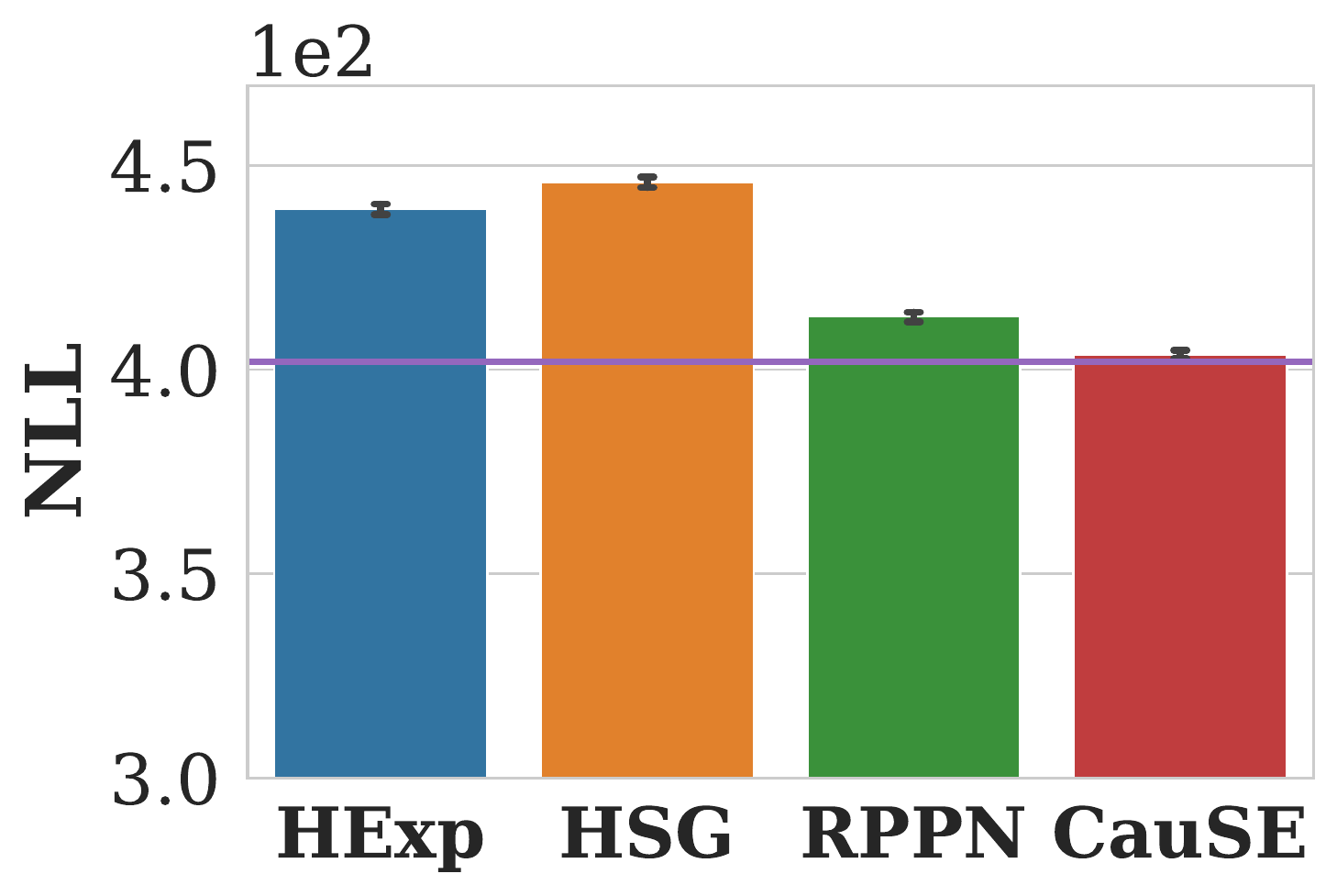}}
  \subfloat[\Synergy]{\includegraphics[width=0.33\columnwidth]{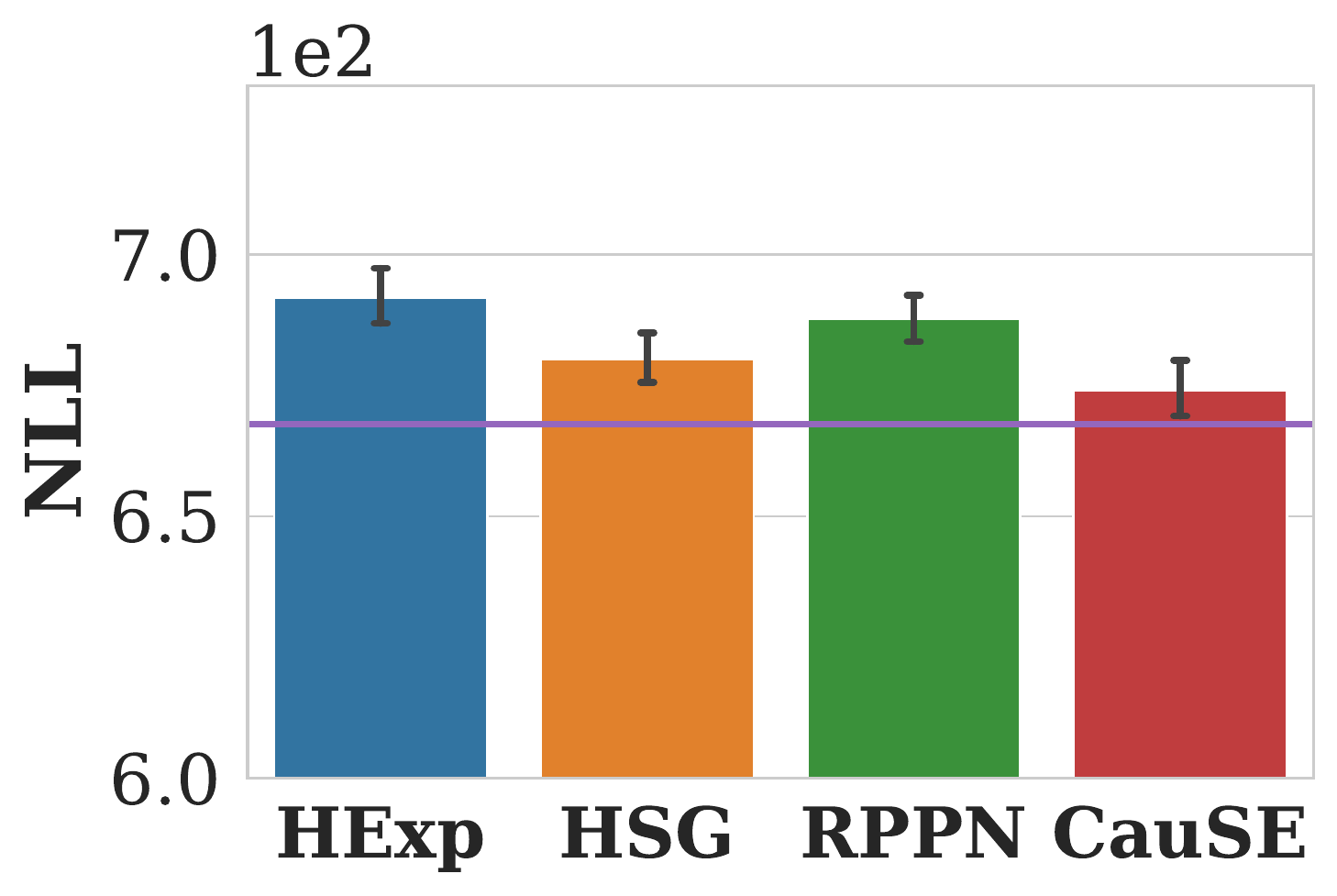}}

  \subfloat[\IPTV]{\includegraphics[width=0.36\columnwidth]{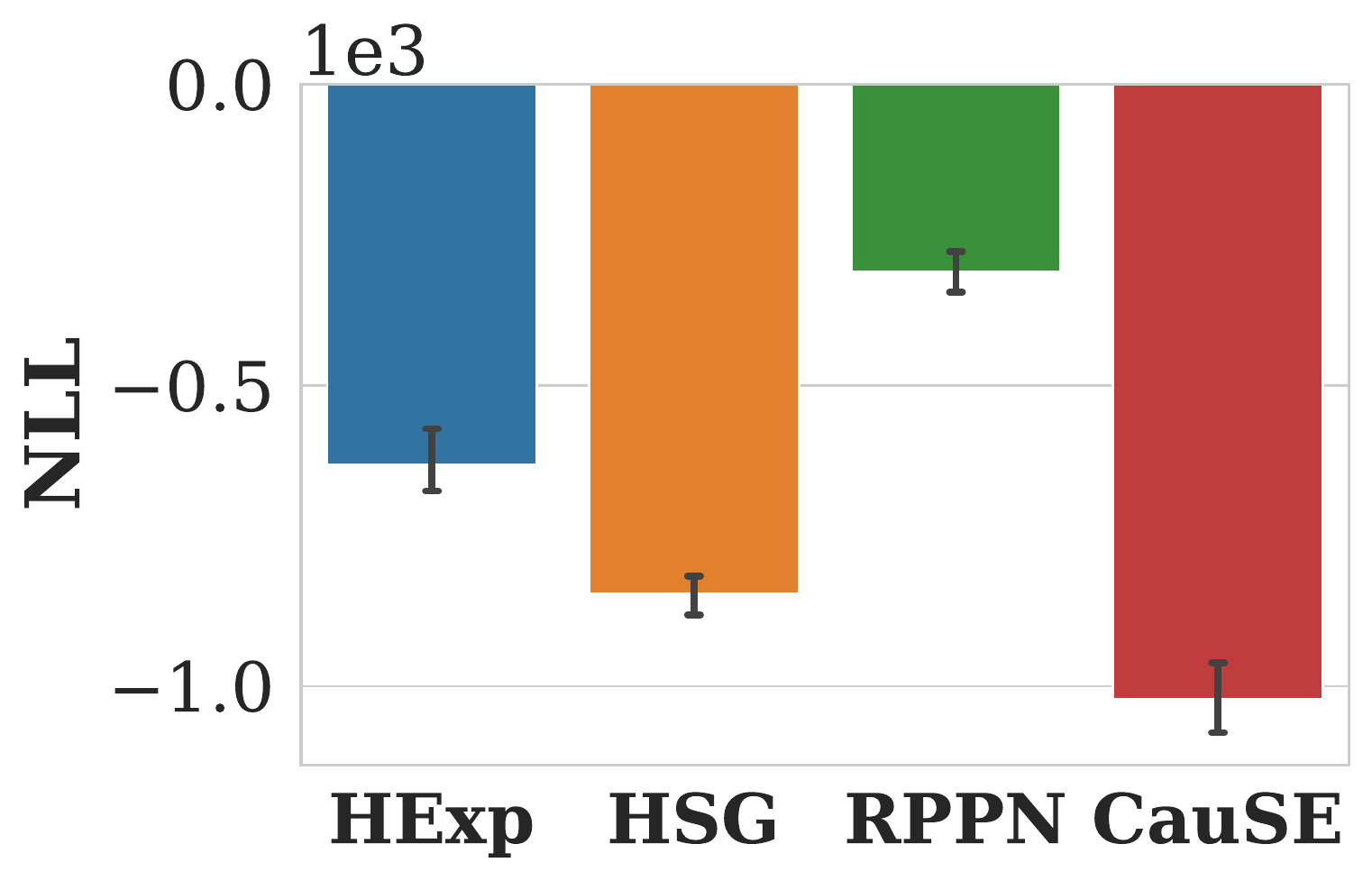}}
  \subfloat[\MT]{\includegraphics[width=0.33\columnwidth]{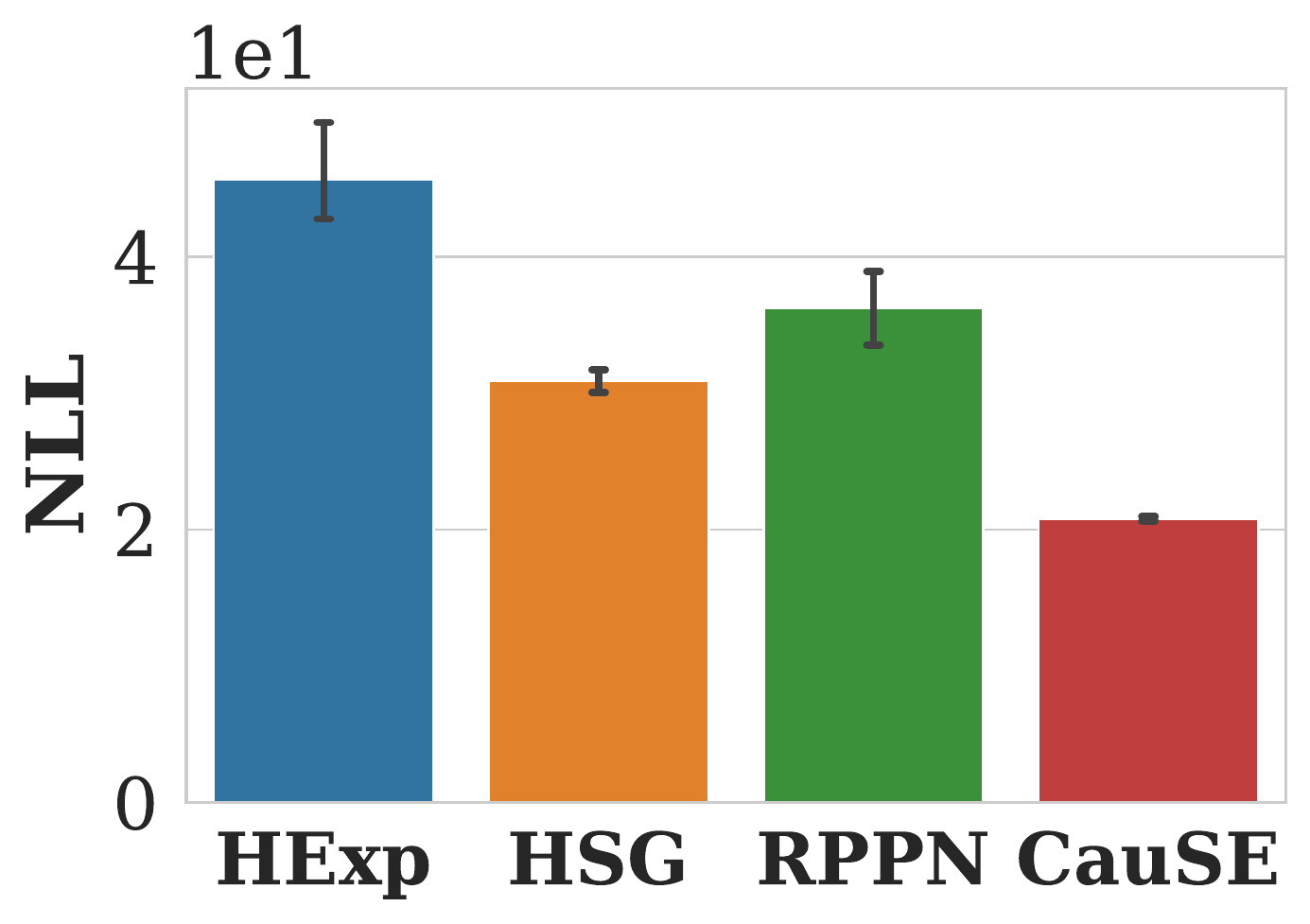}}

  \caption{Hold-out NLLs of various methods, where horizontal lines denote the ground-truth NLLs. \method attains the best NLLs on all datasets.
  }

  \label{fig:barplot-nll}
\end{figure}

\mysubsubsection{Causality Discovery}

We now examined the performance of \method on Granger causality discovery, both quantitatively and qualitatively.

\myparagraph{Quantitative Analsyis.}

\begin{table*}[tbp]

  \centering
  \caption{Results for Granger causality discovery on the four datasets with ground-truth causality available. The best and the second best results on each dataset are emboldened and italicized, respectively.
  }


  \resizebox*{\textwidth}{!}{
  \begin{tabular}{crrrrrrrr}
\toprule
	& \multicolumn{2}{c}{\textbf{\Excitation}} & \multicolumn{2}{c}{\textbf{\Inhibition}}
	& \multicolumn{2}{c}{\textbf{\Synergy}} & \multicolumn{2}{c}{\textbf{\MT}} \\
\cmidrule(lr){2-3}
\cmidrule(lr){4-5}
\cmidrule(lr){6-7}
\cmidrule(lr){8-9}
\textbf{Method}	& AUC & Kendall's $\tau$ & AUC & Kendall's $\tau$ & AUC & Kendall's $\tau$ & AUC & Kendall's $\tau$ \\
\midrule
HExp 		& 0.858$\pm$0.004 & 0.453$\pm$0.005
		 		& \emph{0.546$\pm$0.002} & \emph{0.102$\pm$0.002}
		 		& 0.872$\pm$0.058 & 0.251$\pm$0.039
		 		& 0.404$\pm$0.009 & -0.061$\pm$0.005\\
HSG 		&  \textbf{0.997$\pm$0.001} & \textbf{0.635$\pm$0.002 }
				& 0.490$\pm$0.002 & -0.013$\pm$0.002
				& 0.876$\pm$0.007 & 0.254$\pm$0.039
				& \emph{0.539$\pm$0.008} & \emph{0.024$\pm$0.005} \\
NPHC 		& 0.782$\pm$0.007 & 0.337$\pm$0.010
		 		& 0.400$\pm$0.054 & -0.138$\pm$0.067
		 		& 0.741$\pm$0.129	& 0.163$\pm$0.087
		 		& N/A	& N/A \\
RPPN 		& 0.595$\pm$0.010 & 0.136$\pm$0.012
				& 0.448$\pm$0.003 & -0.066$\pm$0.002
				& \emph{0.891$\pm$0.043} & \emph{0.264$\pm$0.029}
				& 0.492$\pm$0.004 & -0.005$\pm$0.002 \\
\midrule
\method & \emph{0.920$\pm$0.012} & \emph{0.533$\pm$0.013}
				& \textbf{0.921$\pm$0.021} & \textbf{0.532$\pm$0.021}
				& \textbf{0.991$\pm$0.004} & \textbf{0.331$\pm$0.003}
				& \textbf{0.623$\pm$0.012} & \textbf{0.075$\pm$0.007} \\
\bottomrule
\end{tabular}
  }
  \label{tab:granger-causality-results}

  \reducemargin
\end{table*}

Table~\ref{tab:granger-causality-results} shows values of AUC and Kendall's $\tau$ of various methods on the four datasets that have ground-truth causality. The results in the table support the following conclusions.

First, \method performs the best overall and is most robust to various types of event interactions. It not only significantly outperforms all baselines on three of the four datasets (i.e., \Inhibition, \Synergy, and \MT), but also achieves a close-second on \Excitation, in which events were generated by a Hawkes process, and \method is supposed to have a disadvantage relative to Hawkes process-based baselines.

Second, once the underlying data generation process violates the assumptions of Hawkes process and exhibits complex event interactions other than excitation, Hawkes process-based methods tend to perform poorly, as seen from \Inhibition and \Synergy.

Finally, despite both being NPP-based methods, RPPN performs significantly worse than \method on all datasets. We suspect that this underperformance is caused by two issues in RPPN's construction of the Granger causality statistics with the attention weights.
First, RPPN restricts all attention weights to be positive, thus cannot distinguish between excitative and inhibitive effects.
Second, attention mechanism may not correctly attribute the model's prediction to its inputs, as shown in several recent studies \citep{jain2019attention,Serrano2019}.

\myparagraph{Qualitative Analysis.}

Figure~\ref{fig:granger-causality-matrix-IPTV} shows the heat map for the Granger causality matrix of \IPTV dataset estimated by \method.
Almost all diagonal entries have large positive values, indicating that users, on average, exhibit strong tendencies to watch the TV programs of the same category.
Several positive associations between different TV program categories are also observed, such as from military, laws, finance, and education to news, and from kids and music to drama.
These results agree with common sense and are consistent with the findings of an existing study with HSG \citep{xu2016learning}.
Our method also suggests several meaningful negative associations, including ads to drama and education to entertainment; such negative associations, however, can never be detected by models that only consider the excitations between events, such as HSG.

\added{Appendix~\ref{ap:subsec:qualitative-analysis-MT} provides a detailed analysis of the estimated Granger causality matrix for \MT dataset.}

\begin{figure}[!tbp]

  \centering
  \includegraphics[width=0.95\linewidth]{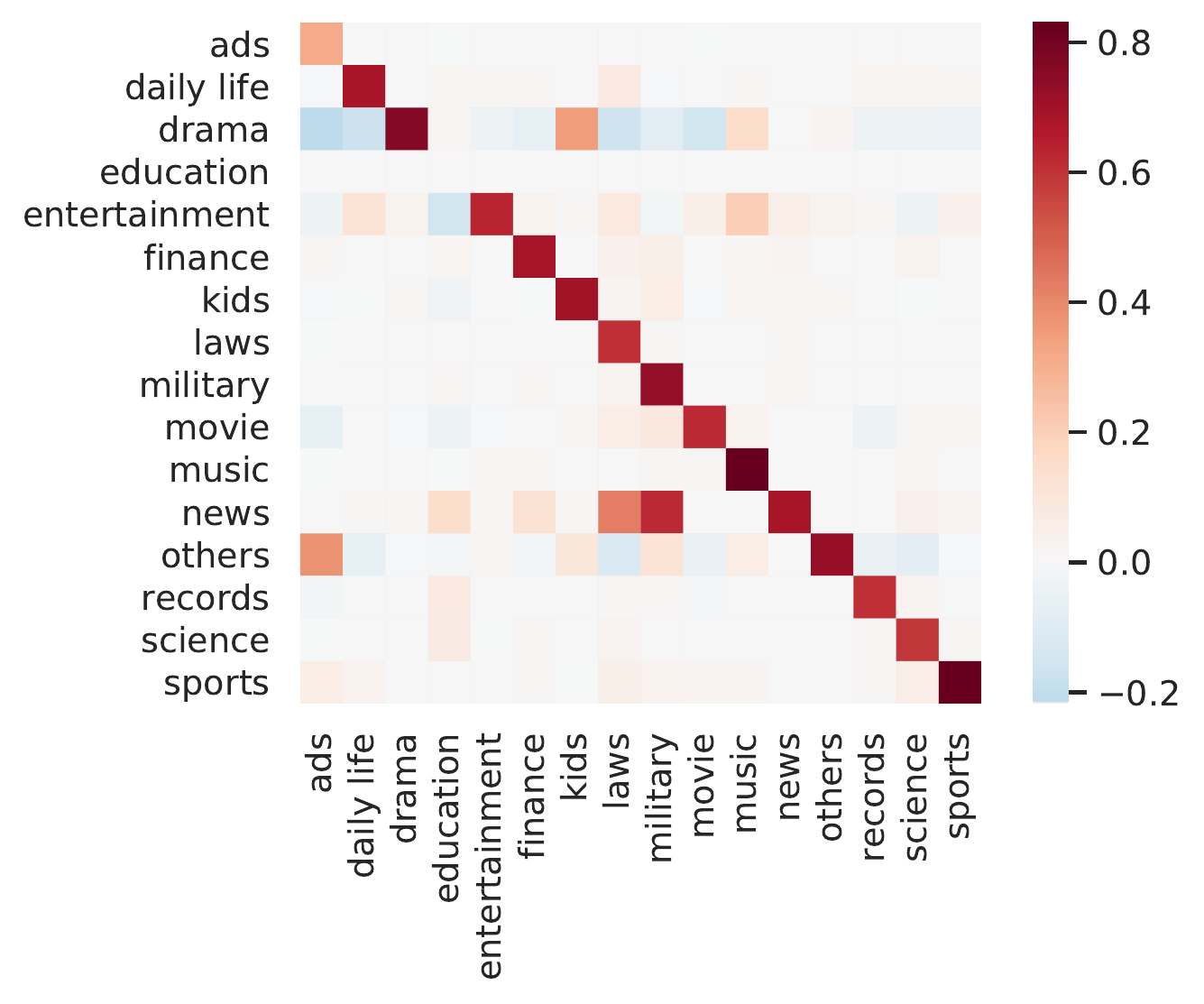}
  \caption{Visualization of the estimated Granger causality statistic matrices on \IPTV \replaced{.}{ and \MT (Figure~\ref{fig:granger-causality-matrix-MT-1} \&~\ref{fig:granger-causality-matrix-MT-2}). Better viewed on screen.}}
  \label{fig:granger-causality-matrix-IPTV}
\end{figure}




\mysubsubsection{Scalability}
\label{subsubsec:scalability}

Finally, we investigate the scalability of \method in computing the Granger causality statistic by Algorithm~\ref{alg:batching-scheme}. Figure~\ref{fig:batching-speedup} shows how much speedup Algorithm~\ref{alg:batching-scheme} achieves over a naive implementation with different sequence lengths and batch sizes.
The results demonstrate that with batch size and average sequence length both being relatively large (i.e., greater or equal to $16$ and $100$, respectively), our algorithm can achieve over \emph{three orders-of-magnitude speedup} relative to a native implementation.
Furthermore, the speedup scales almost linearly with sequence length and batch size when they do not exceed $150$ and $16$, respectively, which is consistent with our analysis in Section~\ref{subsec:computing-causality-statistic}. Beyond this regime, only a sublinear relationship between the speedup and batch size or sequence length is observed, which is because the GPU we tested on was reaching its maximum utilization.

\begin{figure}[!tbp]
  \centering
  \includegraphics[width=0.9\columnwidth]{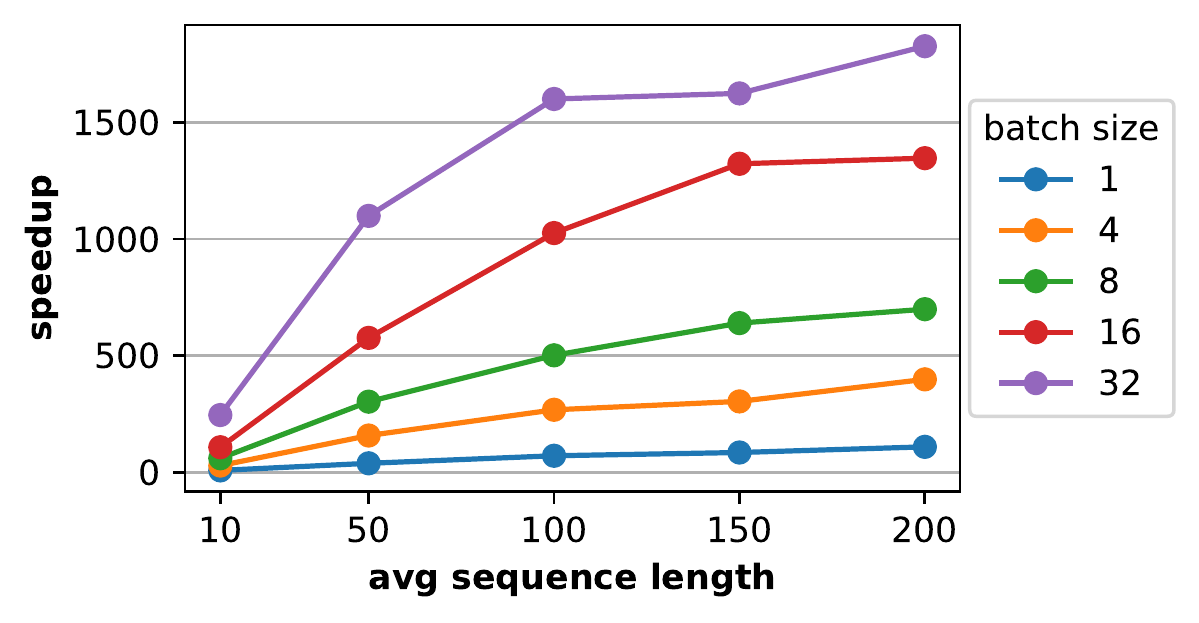}
  \caption{The speedup achieved by Algorithm~\ref{alg:batching-scheme} relative to a naive implementation with different average sequence lengths and batch sizes.}
  \label{fig:batching-speedup}
\end{figure}

\mysection{Conclusion}
\label{sec:conclusion}
We have presented \method, a novel framework for learning Granger causality between event types from multi-type event sequences.
At the core of \method are two steps: first, it trains a flexible NPP model to capture the complex event interdependency, then it computes a novel Granger causality statistic by inspecting the trained model with an axiomatic attribution method.
A two-level batching algorithm is derived to compute the statistic efficiently.
We evaluate \method on both synthetic and real-world datasets abundant with diverse event interactions and show the effectiveness of \method on identifying the Granger causality between event types.


\bibliography{reference}
\bibliographystyle{icml2020}

\appendix
\onecolumn

\section{Additional Related Work}

\paragraph{Event Sequence Modeling}

With the increasing availability of multi-type event sequences, there has been considerable interest in modeling such data for both prediction and inference.
The majority of prior research in this direction has been based on the theory of point processes \citep{Daley2003}, a particular class of stochastic processes that characterize the distribution of random points on the real line.
Notably, Hawkes process \citep{hawkes1971point,hawkes1971spectra}, a special class of point process, has been widely investigated, partly due to its ability to capture mutual excitation among events and its mathematical tractability.
However, Hawkes process assumes that past events can only independently and additively influence the occurrence of future events, and that influence can only be excitative; these inherent limitations have restricted its modeling flexibility and render it unable to capture complex event interaction in real-world data.

As such, other more flexible models have been proposed, including the \emph{piecewise-constant conditional intensity model} (PCIM) \citep{Gunawardana2011} and its variants \citep{Weiss2013,Bhattacharjya2018}, and more recently a class of models loosely referred to as \emph{neural point processes} (NPPs) \citep{du2016recurrent,xiao2017modeling,mei2017neuralhawkes,Xiao2019}. These models, particularly NPPs, generally enjoy better predictive performance than parametric point processes, since they use more expressive machine learning models (e.g., decision trees, random forests, or recurrent neural networks) to sequentially compute the conditional intensity until next event is generated. A significant weakness of these models, however, is that they are generally uninterpretable and thus unable to provide summary statistics for determining the Granger causality among event types.

\paragraph{Granger Causality Discovery}

In his seminal paper, \citet{Granger1969} first proposed the concept of Granger causality for time series data.
Many approaches have been proposed for uncovering Granger causality for multivariate time series, including the Hiemstra-Jones test \citep{Hiemstra1994} and its improved variant \citep{Diks2006}, Lasso-Granger method \citep{Arnold2007}, and approaches based on information-theoretic measures \citep{Hlavackova-Schindler2007}.
However, as these methods are designed for the synchronous multivariate time series, they are not directly applicable to asynchronous multi-type event sequence data, since otherwise one has to discretize the continuous observation window.

\citet{Didelez2008} first established the Granger causality for event types in event sequences under the framework of marked point processes.
Later, \citet{Eichler2017} shows that Granger causality for Hawkes processes is entirely encoded in the excitation kernel functions (also called impact function).
To our best knowledge, existing research for Granger causality discovery from event sequences appears to be limited to the case of Hawkes process \citep{Eichler2017,xu2016learning,achab2017uncovering}, possibly because of this direct link between the process parameterization and Granger causality.

\paragraph{Prediction Attribution for Black-Box Models}

Prediction attribution, the task of assigning to each input feature a score for representing the feature's contribution to model prediction, has been attracting considerable interest in the field due to its ability to provide insight into predictions made by black-box models such as neural networks.
While various approaches have been proposed, there are two prominent groups of approaches: perturbation-based and gradient-based approaches.
Perturbation approaches \citep{Zeiler2014} typically comprise, first, removing, masking, or altering a feature, and then measuring the attribution score of that feature by the change of the model’s output. While perturbation-based methods are simple, intuitive, and applicable to almost all black-box models, the quality of the resultant scores is often sensitive to how the perturbation is performed. Moreover, as these methods scale linearly with the number of input features, they become computationally unaffordable for high-dimensional inputs.

In contrast, backpropagation-based methods construct the attribution scores based on the estimation of local gradients of the model around the input instance with backpropagation. The ordinary gradients, however, could suffer from a ``saturation'' problem for neural networks with activation functions that contain constant-valued regions (e.g., rectifier linear unit (ReLUs)); that is, the gradient coming into a ReLU during the backward pass is zero’d out if the input to the ReLU during the forward pass is in a constant region. One valid solution to this issue is to replace gradients with discrete gradients and use a modified form of backpropagation to compose discrete gradients into attributions, such as layer-wise relevance propagation (LRP) \citep{Bach2015} and DeepLIFT \citep{shrikumar2017learning}. Another solution, proposed by Integrated Gradient (IG) \citep{Sundararajan2017}, is to use the line integral of the gradients along the path from the input to a chosen baseline.
\citet{Sundararajan2017} show that IG satisfies many desirable properties, as detailed in Proposition~\ref{prop:IG-DeepLIFT-properties}.

It is worth mentioning that much existing work often uses the intermediate results, produced by certain intelligible neural network architecture, as the attribution scores for an input. A most notable example of such an idea is the use of attention weights induced by some attention mechanism as the importance of the input \citep{Bahdanau2015,Xu2015}.
Recently, however, there are growing concerns on the validity of attention weights being used as the explanation of neural networks \citep{jain2019attention,Serrano2019}. In particular, \citet{jain2019attention} show that across a variety of NLP tasks, the learned attention weights are frequently uncorrelated with feature importance produced by gradient-based prediction attribution methods, and random permutation of attention weights can nonetheless yield equivalent predictions.

\section{Additional Technical Details}
\label{ap-sec:techinical}

\subsection{Proof of Proposition~\ref{prop:IG-DeepLIFT-properties}}
\label{subsec:proof-IG-DeepLIFT-properties}

\begin{proof}
That both IG and DeepLIFT satisfy~\ref{P:linearity}--\ref{P:impl-invariance} has been established in \citep{Sundararajan2017}. \ref{P:fidelity-to-control} is straightforward from the definion of either method. Thus, we only prove that both methods satisfy batchability~\eqref{P:batchability} with $F(\X) \defas \sum_{i = 1}^n f(\x_i)$.

To prove that IG satisfies batchability, we first rewrite the $\IG(F, \X, \ulX)$ as follows:
\begin{align*}
 \IG(F, \X, \ulX) &= (\X - \ulX) \odot \int_{0}^1 \nabla_{\X} F \left[\ulX + \alpha (\X - \ulX)\right] d\alpha \\
 &= (\X - \ulX) \odot \int_{0}^1 \sum_{i = 1}^n \nabla_{\X} f \left[\ulx_i + \alpha (\x - \ulx_i)\right] d\alpha
 \\
 &= (\X - \ulX) \odot \int_{0}^1 \sum_{i = 1}^n \lbrace\nabla_{\x_i} f \left[\ulx_i + \alpha (\x - \ulx_i)\right] \rbrace \e_i^T d\alpha
 \\
 &= (\X - \ulX) \odot {\left[ \int_{0}^1 \nabla_{\x_i} f \left[\ulx_i + \alpha (\x - \ulx_i)\right] d\alpha \right]}_{i = 1, \ldots, m},
\end{align*}
where the second step is due to that summation and gradients are swapable, and the third step is because the gradients of different terms are separable.
Thus, we have
\begin{equation}
 {\left[\IG(F, \X, \ulX)\right]}_{:, i} = (\x_i - \ulx_i) \odot \int_{0}^1 \nabla_{\x_i} f \left[\ulx_i + \alpha (\x - \ulx_i)\right] d\alpha = \IG(f, \x_i, \ulx_i),
\end{equation}
which establishes the formula.

The proof of DeepLIFT satisfying batchability can be established in a similar way as IG.
The key part, shown in the Proposition 2 of \citep{Ancona2017}, is that the attribution scores produced by DeepLIFT for a neural-network-like function $f$, an input $\ulx$, and a baseline $\ulx$, i.e., $\DeepLIFT(f, \x, \ulx)$ can be viewed as the Hadamard product between $\x - \ulx$ and a modified gradient of $f$ at all its internal nonlinear layers. Since the last layer of $F$ is a simple linear addition of all $f(\x_i)$'s, the modified gradient of $F$ for input $\x_i$ is the same as the one of $f$ for $\x_i$. Thus, we have
\begin{equation}
  {\left[\DeepLIFT(F, \X, \ulX)\right]}_{:, i} = \DeepLIFT(f, \x_i, \ulx_i).
\end{equation}

\end{proof}

\subsection{Proof of Proposition~\ref{prop:shapley-properties}}
\label{ap:subsec:proof-shapley-properties}

We first briefly review Shapley values.
Suppose there is a team of $d$ players working together to earn a certain amount of value. The value that every coalition $U \subseteq [d]$ achieves is $v(U)$, where $v : 2^d \mapsto \bbR$ is a value function.
Shapley values, proposed by \citet{shapley1953value}, provide a well-motivated way to decide how the total earning $v([d])$ should be distributed among such $d$ players. Specifically, the Shapley value for each player $i \in [d]$ is defined as
\begin{equation}
 \phi_v(i) = \sum_{U \subseteq [d] \setminus \{i\}} \frac{(|U|! (d - |U| - 1)!)}{d!} \left[ v(U\cup \{i\}) - v(U) \right].
 \label{eq:shapley-def}
\end{equation}

For any target function $f \in \cF_d$, input $\x \in \cX$, and baseline $\ulx \in \cX$, we define a value function $v_{f, \x, \ulx}(U) \defas f(\x_{U} \sqcup \x_{\bar{U}}) $ for any $U \in [d]$, where $\x_{U} \sqcup \x_{\bar{U}}$ is the spliced data point between $\x$ and $\ulx$, defined in~\eqref{eq:spliced-data}.
Then the Shapley values $[\phi_{v_{f, \x, \ulx}}(i)]_{i \in [d]}$ can be viewed as an attribution method that provides the attribute scores for any $f$, $\x$, and $\ulx$.

Now we prove that this attribution method based on Shapley valeus satisfies all four axioms (\ref{P:linearity}--\ref{P:impl-invariance}) and the fidelity-to-control (\ref{P:fidelity-to-control}), as stated in Proposition~\ref{prop:shapley-properties}.

\begin{proof}
First, it is clear from the definition of Shapley values in~\eqref{eq:shapley-def} that $\phi_{v_{f, \x, \ulx}}(\cdot)$ satisfies linearity~\eqref{P:linearity} and implementation variance~\eqref{P:impl-invariance}.
Since \citet{shapley1953value} shows that for any value function $v$, the Shapley values $\phi_v(\cdot)$ satisfies that
\begin{equation*}
 \phi_v([d]) - \phi_v(\emptyset) = \sum_{i = 1}^d \phi_v(i),
\end{equation*}
substituting our definition of the value function $\phi_{v_{f, \x, \ulx}}(\cdot)$ into the above equation yields
\[
 f(\x) - f(\ulx) = \sum_{i = 1}^d \phi_{v_{f, \x, \ulx}}(i),
\]
which establishes the completeness \eqref{P:completeness}.
For any $i \in [d]$ and $U \subseteq [d] \setminus \{i\}$, we have
\[
 v_{f, \x, \ulx}(U \cup \{i\}) - v_{f, \x, \ulx}(U) = f(\x_{U \cup \{i\}} \sqcup \x_{\bar{U} \setminus\{i\} }) - f(\x_{U} \sqcup \x_{\bar{U}})
\]
Note that $\x_{U \cup \{i\}} \sqcup \x_{\bar{U} \setminus\{i\} }$ and $\x_{U} \sqcup \x_{\bar{U}}$ only potentially differ on the $i$-th dimension. If $f$ does not depend on the $i$-th dimension of its input or $x_i = \underline{x}_i$ (which implies $\x_{U \cup \{i\}} \sqcup \x_{\bar{U} \setminus\{i\} } = \x_{U} \sqcup \x_{\bar{U}}$), then $f(\x_{U \cup \{i\}} \sqcup \x_{\bar{U} \setminus\{i\} }) = f(\x_{U} \sqcup \x_{\bar{U}})$ and thereby $\phi_{v_{f, \x, \ulx}}(i) = 0$. Thus, $\phi_{v_{f, \x, \ulx}}(\cdot)$ satisfies null player~\eqref{P:null-player} and fidelity-to-control~\eqref{P:fidelity-to-control}.

\end{proof}

\subsection{Dyadic Gaussian Basis}
\label{ap:subsec:dyadic-gaussian-basis}

Inspired by the dyadic interval bases used by \citet{Bao2017}, we choose the basis functions $\{\psi_r(\cdot)\}_{r = 1}^R$ to be the densities for a Gaussian family $\{\cN(\mu_r, \sigma_r^2) \}_{r = 1}^R$, which we term \emph{dyadic Gaussian basis}.
The means of dyadic Gaussian basis are given by
\begin{equation}
  \mu_r = \begin{cases}
    0, & r = 1, \\
    L / 2^{R - r},  & r = 2, \ldots, R,
  \end{cases}
\end{equation}
and the standard deviations by $\sigma_r = \max(\mu_r / 3, \mu_2 / 3)$ for $r \in [R]$.
This design of basis functions reflects a reasonable inductive bias that the CIFs should vary more smoothly as the time increases. As shown in Figure~\ref{fig:dyadic-normal-example} for an example of $L=100$ and $R=5$, the first a few bases, due to their small means and variances, capture the short-term effects, whereas the last several characterize the mid/long-term effects.

\begin{figure}[!htbp]
  \centering
  \includegraphics[width=0.6\columnwidth]{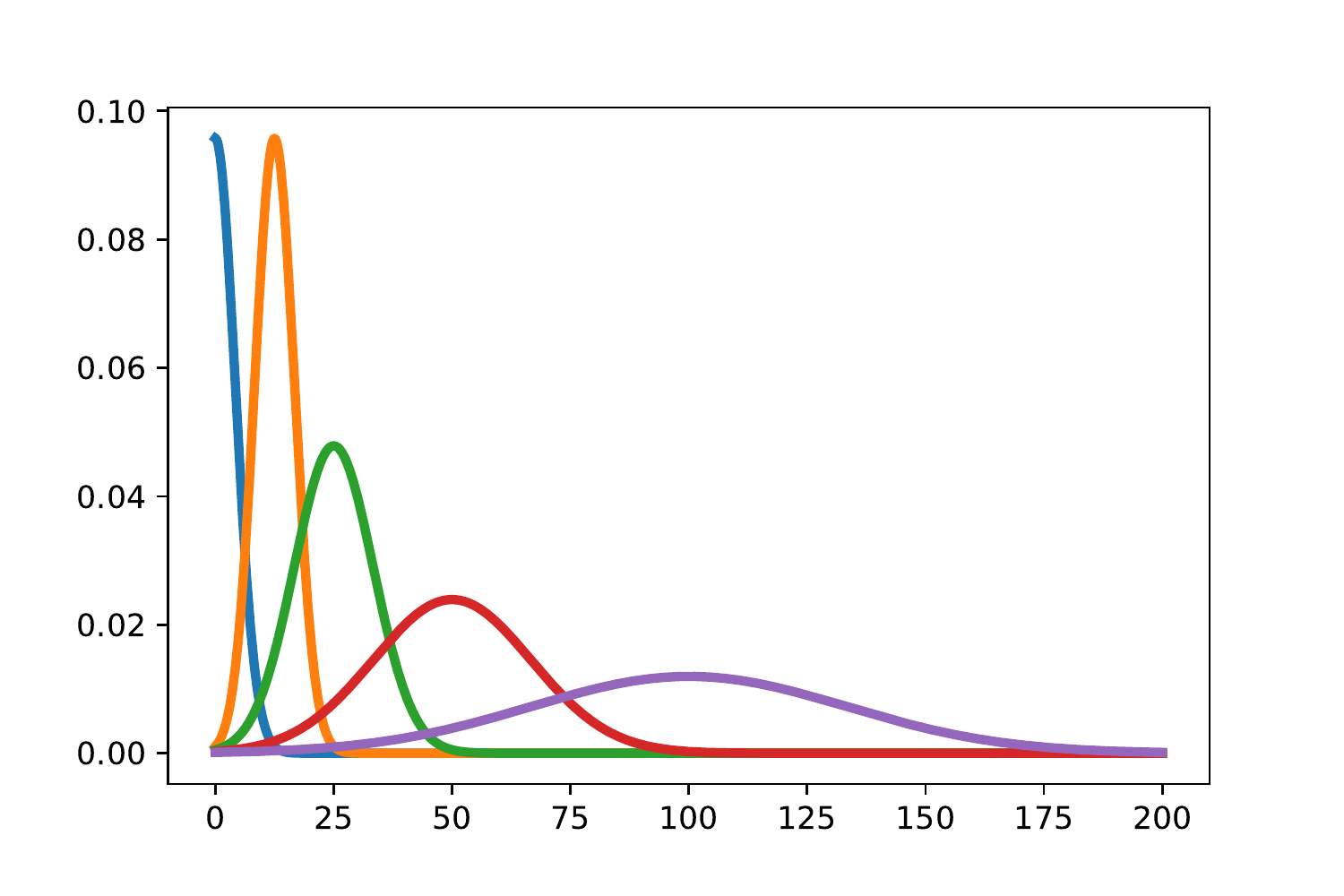}
  \caption{An example of dyadic Gaussian bases for $L = 100$ and $R = 5$.
  The first a few bases, due to their small means and variances, can capture the short-term effects, whereas the last several characterize the mid/long-term effects.
  }
  \label{fig:dyadic-normal-example}
\end{figure}

\subsection{Proof of Proposition~\ref{prop:intra-seq-batching}}
\label{ap:subsec:proof-intra-seq-batching}

\begin{proof}

We omit the index $s$ in this proof for brevity. First, we rewrite $\tdY_{k, k'}$ as
\begin{align*}
 \tdY_{k, k'}= & \sum_{i = 1}^n \sum_{j=1}^i \bbI(k_j = k') \Attr_j(f_k, \x_i, \ulx_i) \\
 = & \sum_{j = 1}^n \bbI(k_j = k') \left[ \sum_{i=j}^n \Attr_j(f_k, \x_i, \ulx_i) \right] \\
 = & \sum_{j = 1}^n \bbI(k_j = k') \left[ \sum_{i=j}^n \Attr_j(F_{k, i}, \x_n, \ulx_n) \right],
\end{align*}
where in the step step, we replace $f$ with $F$.
Since $F_{k, i}$, i.e., $ \int_{t_i}^{t_{i + 1}} \lambda_k(t') dt $, does not depend on the events before the $i$-th event, with null player~\eqref{P:null-player}, we have $\Attr_j(F_{k, i}, \x_n, \ulx_n) = 0$ for any $j < i$, which further implies that
\[
 \tdY_{k, k'} = \sum_{j = 1}^n \bbI(k_j = k') \left[ \sum_{i=1}^n \Attr_j(F_{k, i}, \x_n, \ulx_n) \right].
\]
With linearity~\eqref{P:linearity}, we have
\begin{equation*}
 \tdY_{k, k'} = \sum_{j = 1}^n \bbI(k_j = k') \Attr_j \left(\sum_{i=1}^n F_{k, i}, \x_n, \ulx_n \right),
\end{equation*}
which establishes the formula.

\end{proof}

\section{Additional Experimental Details}

\subsection{The settings for synthetic and real-world datasets.}
\label{ap:subsec:datasets}

We describe below the setup and preprocessing details for the five datasets that we consider in this paper. The statistics of these datasets are summarized in Table~\ref{tab:dataset-statistics}.
\begin{itemize} [itemsep=3pt]

 \item \textbf{\Excitation.}
 This dataset was generated by a multivariate Hawkes process, whose CIFs are of the form:
 \[
   \lambda^*_k(t) = \mu_k + \sum_{i : t_i < t } \alpha_{k, k'} \beta_{k, k'} \exp[- \beta_{k, k'} (t - t_i)].
 \]
 We set $S=1000$, $K=10$, $n_s \sim \Pois(250)$, $\mu_k \sim \Uniform(0, 0.01)$, and $\beta_{k, k} \sim \Exp(0.05)$. To generate a sparse excitation weight matrix $ \A \defas [\alpha_{k, k'}]_{k, k' \in [K]}$, we first selected all its diagonal entries and $M = 16$ random  off-diagonal entries, then generated the values for these entries from $\Uniform(0, 1)$, and finally scaled the matrix to have a spectral radius of $0.8$.

 \item \textbf{\Inhibition.}
 This dataset was generated by a multivariate self-correcting point process, whose CIFs take the form:
 \[
  \lambda^*_k(t) = \exp(\alpha_k t + \sum_{i: t_i < t} w_{k, k_i}).
 \]
 We chose $S = 1000$, $K = 10$, $n_s \sim \Pois(250)$, and $\alpha_k \sim \Uniform(0, 0.05)$. To generated a sparse weight matrix $\W = [w_{k, k'}]_{k, k' \in [K]}$, we first selected all its diagonal entries and $M = 16$ random off-diagonal entries and further generated the values for these entries from $\Uniform(-0.5, 0)$.

 \item \textbf{\Synergy.}
 This dataset was generated by a proximal graphical event model (PGEM) \citep{Bhattacharjya2018}. PGEM assumes that the CIF of an event type depends only on whether or not its parent event types (specified by a dependency graph) have occurred in the most recent history.
 We designed a local dependency structure that consists of five event types labeled as A--E. Among these event types,  type E is the outcome and can be excited by the occurrence of type A, B, or C; type A and B, only when both occurred in the most recent history, would incur a large synergistic effect on type E; type C has an isolated excitative effect on type E and does not interacts with other event types; and finally, type D does not have any excitative effect and is introduced to complicate the learning task.
 The dependency graph, together with the corresponding time windows and intensity tables, illustrated in Figure~\ref{fig:pgem-setting}.
 To add more complexity to this dataset,  we further replicated this local structure for another copy, leading to a total of $K=10$ event types. We generated $S = 1000$ event sequences with a maximum time span of $T = 1000$.

 \item \textbf{\IPTV.} We obtained the dataset from\footnote{\url{https://github.com/HongtengXu/Hawkes-Process-Toolkit/tree/master/Data}}. We further normalized the timestamps into the days and splitted long event sequences so that the length of each sequence is smaller or equal to $1000$.

 \item \textbf{\MT.} We downloaded the raw MemeTracker phrase data from\footnote{\url{https://www.memetracker.org/data.html\#raw}}. We filtered the phrase data that occurred from 2008-08-01 to 2008-09-30 and from the top-100 website domains. We further normalized the timestamps into hours and filtered out those event sequences (i.e., phrase cascades) whose lengths are not in between $3$ and $500$.

\end{itemize}

\begin{table}[!htbp]
  \centering
  \caption{Statistics for various datasets.}
  \begin{tabular}{crrrc}
	\toprule
	  \textbf{Dataset} & \textbf{$S$} & \textbf{$K$}
	& \textbf{\# of events} & \textbf{Ground truth} \\
	\midrule
	\Excitation		& 1,000		& 10		& 250,447 	& Weighted\\
	\Inhibition		& 1,000		& 10		& 250,442		& Weighted\\
	\Synergy 			& 1,000		& 10		& 178,338 	& Binary\\
	\IPTV 				& 1,869 	& 16 		& 966,338 	& N/A\\
	\MT 					& 382,014 & 100 	& 3,419,399 & Weighted\\
	\bottomrule
\end{tabular}

  \label{tab:dataset-statistics}
 \end{table}

\begin{figure}[!htbp]
  \centering
  \includegraphics[width=0.7\textwidth]{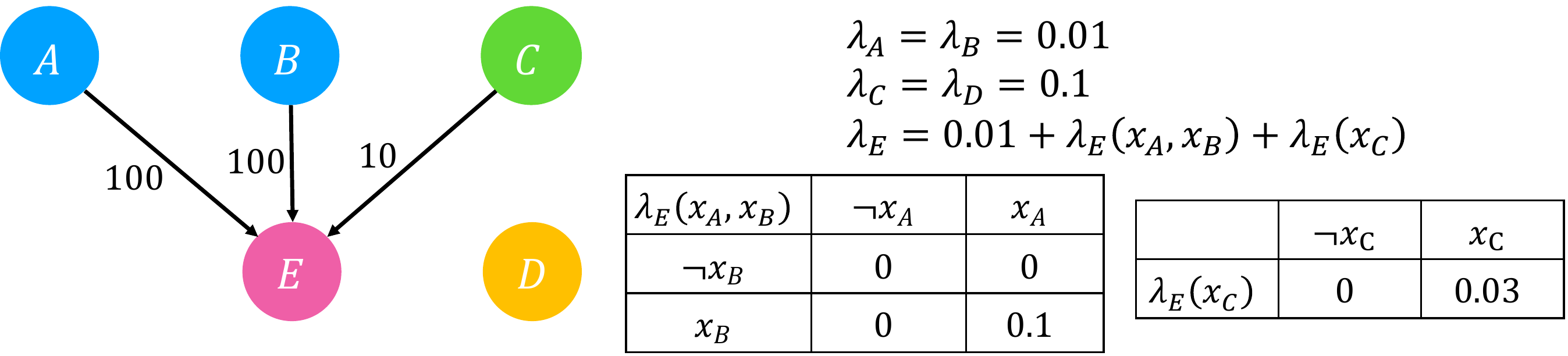}
  \caption{The dependency graph, time windows, and intensity tables for the PGEM used in generating the \Synergy dataset.}
  \label{fig:pgem-setting}
\end{figure}

\subsection{Implementation Details and Hyperparameter Configurations for Various Methods}
\label{ap:subsec:method-config}

For \method, the $\SeqEnc(\cdot)$ and the $\balpha(\cdot)$ were implemented by a single-layer GRU and a two-layer fully connected network with skip connections, respectively.
The dimension of event type embeddings was fixed to $64$, and the number of hidden units for GRU was set to be $64$ for synthetic datasets and $128$ for real-world datasets.
The number of basis functions $R$ and the maximum mean $L$ were chosen by a rule of thumb such that $\mu_2$ and $\mu_R$ are of the same scale as the $50$th and the $99$th percentiles of the inter-event elapsed times, respectively.
The optimization was conducted by Adam with an initial learning rate of $0.001$.
A hold-out validation set consisting of $10\%$ of the sequences of the training set was used for model selection; the model snapshot that attains the smallest validation loss was chosen.
As events sequence lengths vary greatly on two real-world datasets, in constructing mini-batches for both training and inference, we adopted the bucketing technique to reduce the amount of wasted computation caused by padding.
Finally, the line integral of IG, defined in~\eqref{eq:ig-defn}, was approximated by $20$ steps for \MT and $50$ steps for other datasets; a smaller number of steps, although may result in certain lose of accuracy, allows for a larger batch size and thus shorter execution time for attribution.

For the Hawkes process-based baselines---HExp, HSG, and NHPC---their implementation was provided by the package \texttt{tick} \citep{bacry2017tick}. The most relevant hyperparameters for each method were tuned by cross-validation.

As there is no publicly available codes for RPPN, we implemented it with our best effort. Its overall settings for architecture and optimization is similar to the ones for \method.

\subsection{Platform and Runtime}

All experiments were conducted on a server with a 16-core CPU, 512G memory, and two Quadro P5000 GPUs.
On the largest dataset, \MT, the total runtime for \method was less than 3 hours, including both training and computing the Granger causality statistic.


\subsection{Qualitative Analysis on \MT}
\label{ap:subsec:qualitative-analysis-MT}

Since there are too many event types in \MT, instead of a heat map, we visualize the causality matrix as a graph and show in Figure~\ref{fig:granger-causality-matrix-MT-1} and Figure~\ref{fig:granger-causality-matrix-MT-2} the top-two communities of that graph, where the directed edges denote the estimated Granger causality between pairs of domains.\footnote{The graph visualization and community detection were performed using the software \texttt{Gephi}.}
In Figure~\ref{fig:granger-causality-matrix-MT-1}, the domain \texttt{news.google.com} centers in the middle and is pointed by many sites, which is unsurprising because Google News aggregates articles from other publishers and websites. Our method also correctly identifies other major ``information-consuming'' domains such as \texttt{bogleheads.org}, an active forum for investment-related Q\&A.
In Figure~\ref{fig:granger-causality-matrix-MT-2}, the then very popular social networking website \texttt{blog.myspace.com} sits in the center of the community.
Our method also identifies credible excitative relationships between the subdomains of \texttt{craigslist.org}, a mega-website that hosts classified, local advertisements.

\begin{figure*}[!tbp]

  \centering
  \subfloat[\MT (Community 1)]{
  \includegraphics[width=0.45\textwidth]{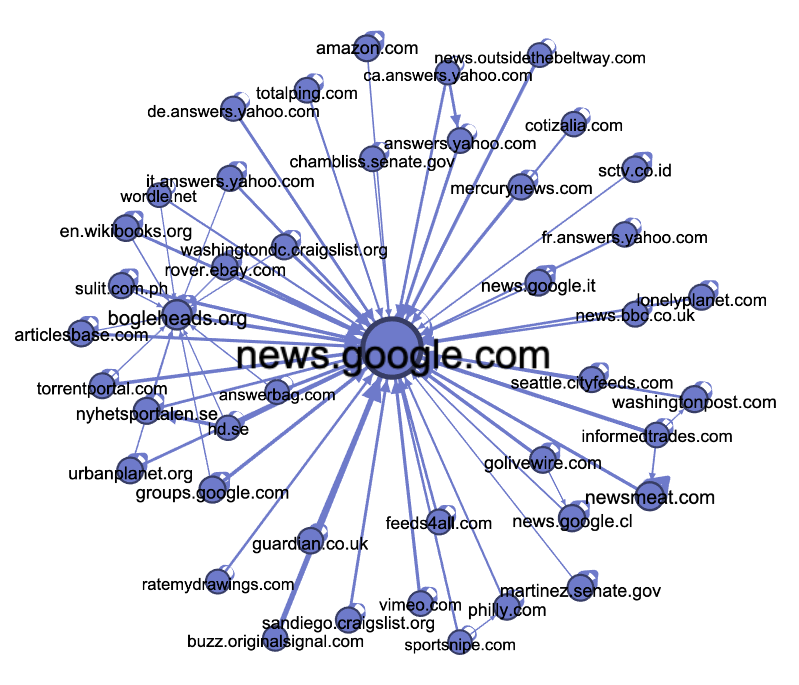}
  \label{fig:granger-causality-matrix-MT-1}
  }
  \subfloat[\MT (Community 2)]{
  \includegraphics[width=0.45\textwidth]{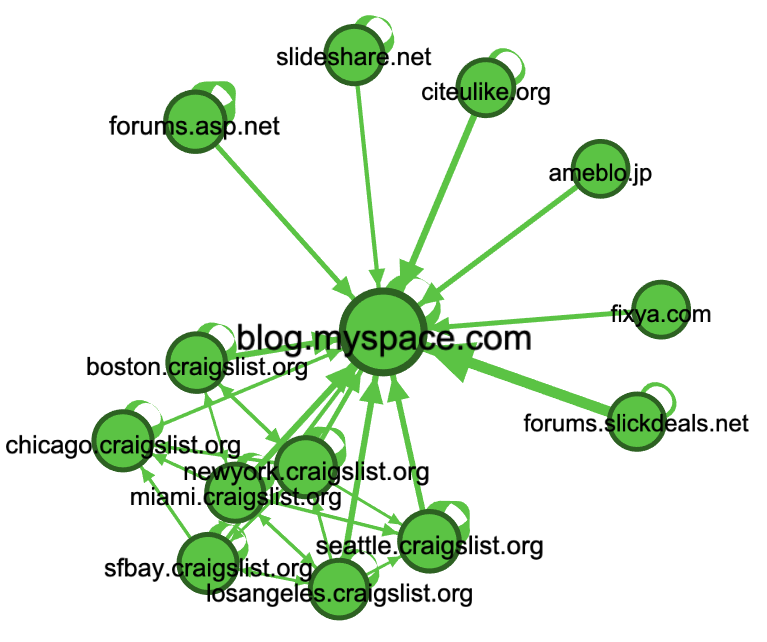}
  \label{fig:granger-causality-matrix-MT-2}
  }
  \caption{The top-two communities of the estimated Granger causality statistic matrices on \MT (Figure~\ref{fig:granger-causality-matrix-MT-1} \&~\ref{fig:granger-causality-matrix-MT-2}). Better viewed on screen.}

  \label{fig:granger-causality-matrix}
\end{figure*}

\section{A Primer on Measure and Probability Theory}
\label{ap:sec:measure-theory}

In this section, we review some of the basic definitions of in measure theory, which may help the understanding of the definition of Granger causality for multivariate point processes. Most of the content in this section were adapted based on primarily based on the Chapter 1 of \citep{durrett2019probability} and the Appendix 3 of \citep{Daley2003},

Let $\Omega$ be a set of ``outcomes'' and $\cF$ a nonempty collection of subsets of $\Omega$. The set $\cF$ is \textbf{$\sigma$-algebra} of $\Omega$, if it is closed under complement and countable unions; that is,
\begin{enumerate}
  \item if $A \in \cF$, then $ \Omega \setminus A \in \cF$, and
  \item if $A_i \in \cF$ is a countable sequence of sets, then $\cup_i A_i \in \cF$.
\end{enumerate}
With these two conditions, it's easy to see that $\sigma$-algebra is also closed under arbitrary (possibly uncountable) intersections.
From this, it follows that given a nonempty set $\Omega$ and a collection of $\cA$ of subsets of $\Omega$, there is a smallest $\sigma$-algebra containing $\cA$; we denote such smallest $\sigma$-algebra by $\sigma(\cA)$.
One particular $\sigma$-algebra is of particular interst---\textbf{Borel $\sigma$-algebra}; that is, the smallest $\sigma$-algebra containing all open sets in $\bbR^d$, denoted by $\cR^d$.
Specifically, let $\cS_d$ be the empty set plus all sets of the form $(a_1, b_1] \times \cdots (a_d, b_d] \subset \bbR^d$, where $-\infty \leq a_i < b_i \leq \infty$, then $\cR^d = \sigma(\cS^d)$. The superscript $d$ is dropped when $d = 1$.

A pair $(\Omega, \cF)$, in which $\Omega$ is a set and $\cF$ is a $\sigma$-algebra of $\Omega$, is called a \textbf{measurable space}. A \textbf{measure} defined on $(\Omega, \cF)$ is a nonnegative countably additive set function; that is a function $\mu: \cF \mapsto \bbR$ with
\begin{enumerate}
  \item $\mu(A) \geq \mu(\emptyset) = 0$ for all $A \in \cF$, and
  \item if $A_i \in \cF$ is a countable sequence of disjoint sets, then
  \begin{equation*}
    \mu(\bigcup_i A_i) = \sum_i \mu(A_i )
  \end{equation*}
\end{enumerate}
If $\mu(\Omega) = 1$, we call such a $\mu$ a \textbf{probability measure}.
The triplet $(\Omega, \cF, \mu)$ is called a \textbf{measure space}, and a \textbf{probability space} if $\mu$ is a probability measure.

Given a probability space $(\Omega, \cF, \mu)$, a real-valued function $X$ defined on $\Omega$ is said to be a \textbf{random variable} if for every Borel set $B \in \cR$ we have $X^{-1}(B) \defas \{\omega: X(\omega) \in B \} \in \cF$; in another words, $X$ is \textbf{$\cF$-measuable}.
A \textbf{stochastic process} is a collection of random variables $\{X_i\}_{i \in \cI}$ defined on a common probability space and indexed by a \textbf{index set} $\cI$. In most cases, the index set can be positive numbers $\bbN_+$, or real line $\bbR_+$.
A \textbf{filtration} is a sequence of $\sigma$-algebras, denoted by $\{\cF_i\}_{i \in \cI}$, if $\cF_j \subseteq \cF_i$ if $j \leq i$ and $i, j \in \cI$.
Given a stochastic process $\{X_i\}_{i \in \cI}$ defined on $(\Omega, \cF, \mu)$, the \textbf{natural filtration} of $\cF$ with respect to the process is given by
\begin{equation}
  \cH_i \defas \sigma \left(\{ X_j^{-1}(B) | j \in \cI, j \leq i, B \in \cR \} \right).
  \label{eq:natural-filtration}
\end{equation}
It is in a sense that the simplest filtration filtration available for studying the given: all information concerning the process, and only that information, is avaiable in the natural filtration. Thus, the natural filtration $\cH_i$ can be often be viewed as the ``\textbf{history}'' of the subprocess $\{X_j\}_{j \leq i, j \in I}$. Note that sometimes the definition in~\eqref{eq:natural-filtration} is simply written as $\cH_i \defas \sigma \left( \{ X_j | j \in \cI, j \leq i\} \right)$.

A \textbf{point process} $\{T_i\}_{i\geq 1}$ is a real-valued stochastic process indexed on $N_+$ such that $T_i \leq T_{i + 1}$ almost surely. Each random variable is generally viewed as the arrival timestamp of an event.
For each point process, one can define a continuously indexed stochastic process associated with it called \textbf{counting process}, as $N(t) \defas \sum_{i = 1}^{\infty} \bb1( T_i \leq  t)$. From this definition, it is easily seen that every realization of a counting process is a c\`adl\`ag (i.e.\ right continuous with left limits) step function, and that a counting process $N(t)$ equivalently defines a point process, as one can recover the event timestamp by $T_i = \inf \{t \geq 0: N(t) = i \}$. Due to this equivalence, the phrases point process and counting process, as well as their notation, $\{T_i\}_{i \in \bbN_+}$ and $N(t)$, are often used interchangably in the literature.
A $K$-dimensional \textbf{multivariate point process (MPP)}  is a coupling of $K$ point/counting process $\N(t) = [N_1(t), N_2(t),\ldots, N_K(t)]$. A realization of a multivariate point process is a multi-type event sequence, $\{(t_i, k_i)\}_{i \in \bbN_+}$, where $t_i$ indicates the event timestmap of the $i$-th event, and the $k_i$ indicates which dimension the $i$-th event comes from (often interpreted as event type).

The most common way to define an MPP is through a set of \textbf{conditional intensity functions (CIFs)}, one for each event type.
Specifically, let $\cH(t) \defas \sigma( \{N_k(s) | k \in [K], s < t\})$ for any $t$ be the natural filtration of MPP and let $\cH(t-) \defas \lim_{s \uparrow t} \cH(s)$ the CIF for event type $k$ is defined as the expected instantaneous event occurrence rate conditioned on natural filtration, i.e.,
\begin{equation*}
 \lambda^*_k(t) \defas \lim_{\Delta t \downarrow 0} \frac{\bbE [N_k(t + \Delta t) - N_k (t) | \cH(t)]}{\Delta t},
\end{equation*}
where the use of the asterisk is a notational convention to emphasize that intensity $\lambda^*_k(t)$ must be $\cH(t)$-measurable for every $t$.

Finally, for any $\cK \subseteq [K]$, denote by $\cH_{\cK}(t)$ the natural filtration expanded by the sub-process ${\{N_k(t)\}}_{k \in \cK}$, i.e., $\cH_{\cK}(t) = \sigma( \{N_k(s) | k \in \cK, s < t\})$, and further write $\cH_{-k}(t) = \cH_{[K] \setminus \{k\}}(t)$ for any $k \in [K]$.  For a $K$-dimensional MPP, event type $k$ is \textbf{Granger non-causal} for event type $k'$ if $\lambda^*_{k'}(t)$ is $\cH_{-k}(t)$-measurable for all $t$.
This definition amounts to saying that a type $k$ is Granger non-causal for another type $k'$ if, conditioned on the history of events other than type $k$, the future $\lambda^*_{k'}(t)$ does not depend on the historical events of type $k$ at any time. Otherwise, type $k$ is said to be \emph{Granger causal} for type $k$.

\end{document}